\documentclass{article}

% if you need to pass options to natbib, use, e.g.:
    % \PassOptionsToPackage{numbers, compress}{natbib}
% before loading neurips_2024

% ready for submission
% \usepackage{neurips_2024}

% to compile a preprint version, e.g., for submission to arXiv, add add the
% [preprint] option:
% \usepackage[nonatbib]{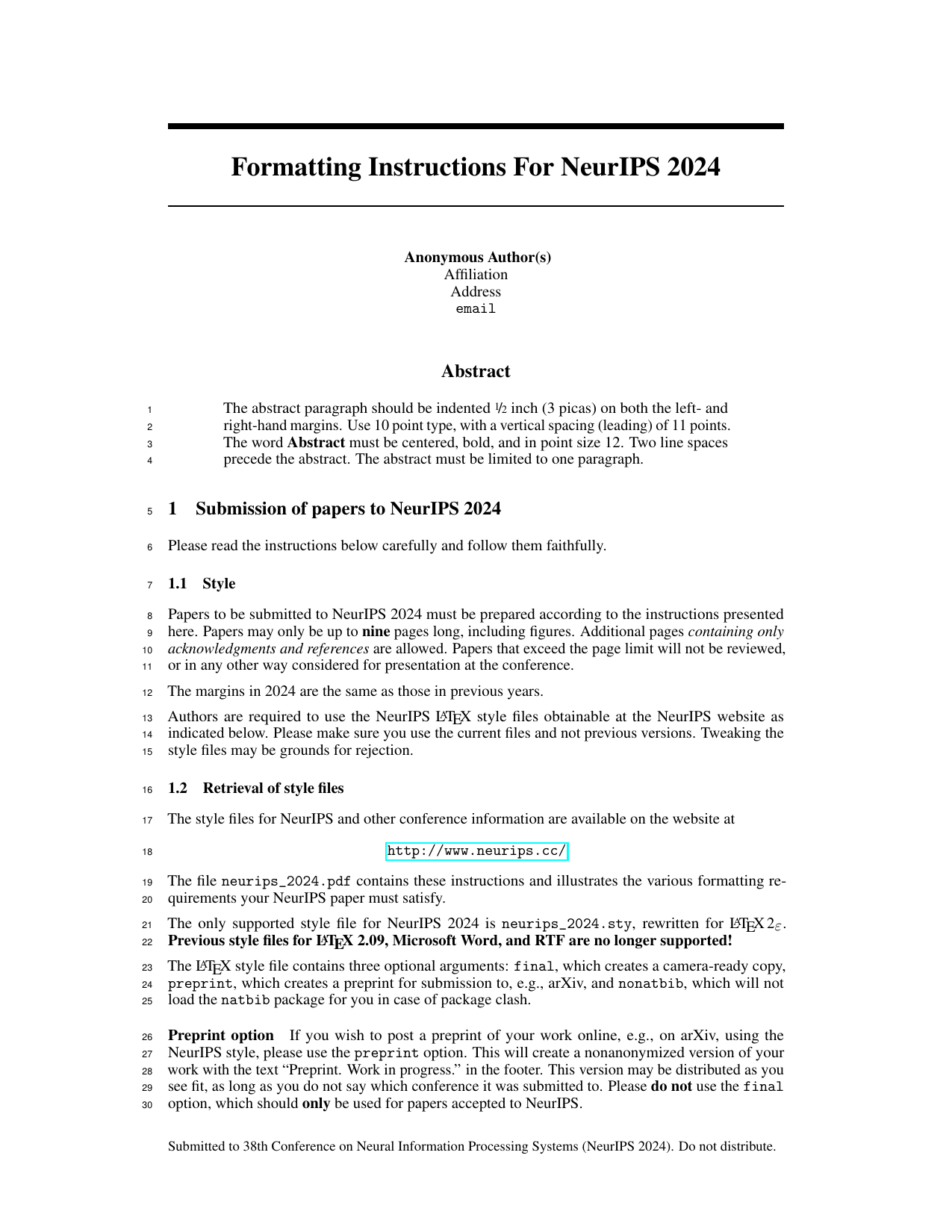}

% to compile a camera-ready version, add the [final] option, e.g.:
\usepackage[nonatbib, final]{neurips_2024}

% to avoid loading the natbib package, add option nonatbib:
% \usepackage[nonatbib]{neurips_2024}

\usepackage[utf8]{inputenc} % allow utf-8 input
\usepackage[T1]{fontenc}    % use 8-bit T1 fonts
\usepackage{hyperref}       % hyperlinks
\usepackage{url}            % simple URL typesetting
\usepackage{booktabs}       % professional-quality tables
\usepackage{amsfonts}       % blackboard math symbols
\usepackage{amssymb}
\usepackage{nicefrac}       % compact symbols for 1/2, etc.
\usepackage{microtype}      % microtypography
\usepackage[dvipsnames]{xcolor}         % colors
\usepackage{amsmath}
\usepackage{amsthm}
\usepackage{algorithm}
\usepackage{graphicx}
\usepackage{algpseudocode}
\usepackage{wrapfig}
\newcommand{\Hquad}{\hspace{0.5em}}
\newcommand{\NNN}{\nonumber\\}
\theoremstyle{plain}

\newtheorem{proposition}{Proposition}
\newtheorem{lemma}{Lemma}
\newtheorem{corollary}{Corollary}
\usepackage{multirow}

\title{Safety through feedback in Constrained RL}

% The \author macro works with any number of authors. There are two commands
% used to separate the names and addresses of multiple authors: \And and \AND.
%
% Using \And between authors leaves it to LaTeX to determine where to break the
% lines. Using \AND forces a line break at that point. So, if LaTeX puts 3 of 4
% authors names on the first line, and the last on the second line, try using
% \AND instead of \And before the third author name.

\renewcommand\footnotemark{}
\author{%
  Shashank Reddy Chirra$^{*1}$, Pradeep Varakantham$^1$, Praveen Paruchuri$^2$ \thanks{$^*$Corresponding Author}\\
  $^1$Singapore Management University, $^2$IIIT Hyderabad \\
  \texttt{\{shashankc,pradeepv\}@smu.edu.sg, praveen.p@iiit.ac.in} \\
}

\begin{document}

\maketitle

\begin{abstract}
  In safety-critical RL settings, the inclusion of an additional cost function is often favoured over the arduous task of modifying the reward function to ensure the agent's safe behaviour. However, designing or evaluating such a cost function can be prohibitively expensive. For instance, in the domain of self-driving, designing a cost function that encompasses all unsafe behaviours (e.g., aggressive lane changes, risky overtakes) is inherently complex, it must also consider all the actors present in the scene making it expensive to evaluate. In such scenarios, the cost function can be learned from feedback collected offline in between training rounds. This feedback can be system generated or elicited from a human observing the training process. Previous approaches have not been able to scale to complex environments and are constrained to receiving feedback at the state level which can be expensive to collect. To this end, we introduce an approach that scales to more complex domains and extends beyond state-level feedback, thus, reducing the burden on the evaluator. Inferring the cost function in such settings poses challenges, particularly in assigning credit to individual states based on trajectory-level feedback. To address this, we propose a surrogate objective that transforms the problem into a state-level supervised classification task with noisy labels, which can be solved efficiently. Additionally, it is often infeasible to collect feedback for every trajectory generated by the agent, hence, two fundamental questions arise: (1) Which trajectories should be presented to the human? and (2) How many trajectories are necessary for effective learning? To address these questions, we introduce a \textit{novelty-based sampling} mechanism that selectively involves the evaluator only when the the agent encounters a \textit{novel} trajectory, and discontinues querying once the trajectories are no longer \textit{novel}. We showcase the efficiency of our method through experimentation on several benchmark Safety Gymnasium environments and realistic self-driving scenarios. Our method demonstrates near-optimal performance, comparable to when the cost function is known, by relying solely on trajectory-level feedback across multiple domains. This highlights both the effectiveness and scalability of our approach. The code to replicate these results can be found at \href{https://github.com/shshnkreddy/RLSF}{https://github.com/shshnkreddy/RLSF}
\end{abstract}

%%%%%%%%%%%%%%%%%%%%%%%%%%%%%%%%%%%%%%%%%%%%%%%%%%%%%%%%%%%%%%%%%%%%%%%%
\section{Introduction}
Reinforcement Learning (RL) is known to suffer from the problem of reward design, especially in safety-related settings \cite{cmdp}. In such a case, constrained RL settings have emerged as a promising alternative to generate safe policies\cite{ppolag, sim, cup}. This framework introduces an additional cost function to split the task related information (rewards) from the safety related information (costs). In this paper, we address a scenario where the reward function is well-defined, while the cost function remains unknown \textit{a priori} and requires inference from feedback. This arises in cases where the cost function is \textit{expensive} to design or evaluate. For instance, consider the development of an autonomous driving system, where the reward function may be defined as the time taken to reach a destination which is easy to define. However, formulating the cost function presents significant challenges. Designing a comprehensive cost function that effectively penalizes all potential unsafe behaviors is non-trivial, and ensuring safety often involves subjective judgments, which can vary based on individual preferences. Even if one succeeds in devising such a function, it must account for all environmental factors, such as neighboring vehicles and pedestrians. The evaluation of such a cost function in high-fidelity simulators can be prohibitively expensive.

Feedback can be collected from either a human observer, who monitors the agent's training process and periodically provides feedback on presented trajectories, or from a system that computes the cost incurred on selected trajectories. Throughout this paper, we use the term \textit{evaluator} to refer to the entity providing feedback, whether human or system-generated. 

Cost inference in Constrained RL settings has gained recent attention. One key thread of research has focused on learning cost from from constraint abiding expert demonstrations \cite{costinf1, costinf2}. However, these expert demonstrations are not easily available in all settings. For example, consider robotic manipulation tasks where the human and the robot have different morphologies. This paper focuses on the second line of research in this area, where we generate trajectories and gather feedback from an evaluator to infer the underlying cost function. Prior works in this thread make limiting assumptions on the nature of the cost function such as \textit{smoothness} or \textit{linearity} \cite{costinf1, costinf2, costinf3, costinf4} and are limited to obtaining feedback at the state level \cite{costinfmain} which is expensive to collect from human evaluators. We do not make such assumptions and can take feedback provided at over longer horizons, in some cases the entire trajectory.

Frameworks for learning from feedback must exhibit the following properties as emphasized in \cite{costinfmain, prefrl1, prefrlmain}: 1) Feedback must be collected \textit{offline} in between rounds, since the agent may need to act in real time. 2) The amount of feedback collected must be \textit{minimized}. 3) \textit{Binary} feedback is more suitable as compared to \textit{numeric} feedback as it is more intuitive to provide for humans. It has also been shown that humans provide less consistent feedback if it is \textit{numeric} \cite{prefrlmain}. 4) Each state is assigned a binary cost value, indicating that it is inherently \textit{safe} or \textit{unsafe}, which is more intuitive for humans when assessing the safety of policies \cite{crl_binary_feedback} \footnote{The last two points are specific to human evaluators}.

To this end we propose the \textbf{R}einforcement \textbf{L}earning from \textbf{S}afety \textbf{F}eedback (\textbf{RLSF}) algorithm, which embodies all the aforementioned properties. The key contributions of this algorithm include:
\begin{itemize}    
    \item Extends prior work to collect feedback over longer horizons. This is done by presenting the evaluator with the entire trajectory, breaking the trajectory into segments and eliciting feedback at the segment level. Inferring the costs directly by \textit{maximizing likelihood} presents new challenges due to the problem of credit assignment over longer horizons. To tackle this, we present a surrogate loss that converts the problem from trajectory level cost inference to a supervised binary classification problem with \textit{noisy} labels. 

    \item Introduces a \textit{novelty based sampling} mechanism that reduces the number of queries by sampling \textit{novel} trajectories for feedback. 

    \item Learns safe policies across diverse benchmark safety environments. We also show that the learnt cost function can be \textit{transferred} to train an agent with different dynamics/morphology from scratch without collecting additional feedback.
\end{itemize}

\section{Preliminaries}
\paragraph{Markov Decision Process} A Markov Decision Process (MDP) $\mathcal{M}$ is defined by the tuple $(\mathcal{S},\mathcal{A},\mathcal{P},r,\gamma,\mu)$, where $\mathcal{S}$ denotes the set of states, $\mathcal{A}$ is the set of actions, $\mathcal{P}(s'|s,a) \in [0,1]$ is the transition probability, $r(s,a) \in \mathbb{R}$ is the reward function, $\gamma \in [0,1]$ is the discount factor and $\mu(s) \in \Delta(\mathcal{S})$ is the initial state distribution. A policy $\pi(.|s) \in \Delta(\mathcal{A})$ is a distribution over the set of valid actions for state $s$. We denote the set of all stationary policies as $\Pi$. A \textit{trajectory} $\tau = \{(s_t, a_t)\}$ denotes the state-action pairs encountered by executing $\pi$ in $\mathcal{M}$. We use the short hand $\tau_{i:j}$ to denote a \textit{trajectory segment}, i.e, the subsequence of $(s_t, a_t)$ pairs encountered from timestep $i$ to $j$. The \textit{expected value} of a function $f$ under $\pi$ as $\mathcal{J}^{f}(\pi) \triangleq E_{\tau \sim \pi}[\sum_{t=0}^{\infty}\gamma^tf(s_t,a_t)]$. We also employ the shorthand $f(\tau)$ to represent the discounted sum of $f$ along the trajectory $\tau$. The occupancy measure of a policy is define as $\rho(s,a) = E_{\tau \sim \pi} [\sum_{t=0}^{\infty}\gamma^t \mathbb{I}[(s_t,a_t)=(s,a)]]$, where $\mathbb{I}[.]$ denotes the indicator function. $\rho$ describes the frequency with which a state-action pair is visited by $\pi$.  

\paragraph{Constrained Markov Decision Process}
A Constrained MDP \cite{cmdp} introduces a function $c(s,a) \in \mathbb{R}$ and a cost threshold $c_{max} \in \mathbb{R}$ that defines the maximum cost that can be accrued by a policy. The set of feasible policies is defined as $\Pi_{c} = \{\pi \in \Pi: \mathcal{J}^{c}(\pi) \leq c_{max}\}$. A policy is considered to be \textit{safe} w.r.t $c$ if it belongs to $\Pi_{c}$.

\section{Problem Definition}
\label{sec:prob_defn}
In this paper, we consider the  constrained RL problem defined as, 
\begin{align}
    \pi^* = \underset{\pi \in \Pi_{c}}{\text{argmin}} \Hquad \mathcal{J}^{r}(\pi)
\end{align}
We assume the threshold $c_{max}$ is known, but the cost function is not known and must be inferred from feedback collected from an external evaluator. In many scenarios $c_{max}$ is typically known, representing a predefined limit on acceptable costs or risks in the environment. However, crafting the cost function $c(s,a)$ such that it penalizes all \textit{unsafe} behaviour can be infeasible.

We incorporate an additional constraint enforcing the cost function to be binary, i.e, $c(s,a) \in \{0,1\}$. This ensures that each state-action pair is inherently categorized as either \textit{safe} or \textit{unsafe}. We opt for this approach because it is simpler for human evaluators to assign a binary safety value to state-actions when assessing policy safety, as emphasized in \cite{crl_binary_feedback}.

\section{Method}
\label{sec:method}
In this section, we introduce \textbf{R}einforcement \textbf{L}earning from \textbf{S}afety \textbf{F}eedback (\textbf{RLSF}), an on-policy algorithm that consists of two alternating stages: 1) Data/Feedback collection and 2) Constraint inference/Policy improvement. In the first stage, data is collected via rollouts of the current policy for a fixed number of trajectories. Next, a subset of these trajectories is presented for feedback from evaluator, which are then stored in a separate buffer. The second stage consists of two parts: i) Estimation of the cost function from the feedback data and ii) Improvement of the policy using the collected trajectories and their inferred costs. We repeat stages (1) and (2) until convergence.

First, we highlight how the feedback is collected and propose a method to infer the constraint function using this data. Next, we recognize the practical limitations of acquiring feedback for every trajectory during training and detail our approach to sampling a subset of trajectories for efficient cost learning of the cost function. Finally, we detail how the inferred cost function is used to improve the policy.

\subsection{Nature of the Feedback}
In the feedback process, the evaluator is first presented with the entire trajectory $\tau_{0:T}$. Afterward, the trajectory is divided into contiguous segments $\tau_{i:j}$ of length $k$, and feedback is collected for each segment. The segment length can be adjusted based on the complexity of the environment: in simpler environments, feedback can be gathered for the entire trajectory, whereas for environments with long horizons and sparse cost violations, shorter segments may be used. This approach simplifies the challenge of assigning credit to individual states. Similar methods have been adopted in other works that rely on human feedback \cite{prefrlmain, prefrl1}. However, reducing segment length comes at an increased cost of obtaining feedback from the evaluator.

The evaluator is tasked with classifying a segment as \textit{unsafe} if the agent encounters an \textit{unsafe} state at any point within the segment. This decision was made to ensure consistent feedback from the evaluator. Alternative approaches—such as marking a segment \textit{unsafe} based on the number of unsafe states visited or leaving the classification to the evaluator's subjective judgment are more prone to generating inconsistent feedback in the case of human evaluators \cite{prefrlmain}. 

\subsection{Inferring the Cost Function}
\label{sec:theory}
Let $P = \{\tau_{i:j}, y^{safe}\}$ represent the feedback collected from the evaluator, where $y^{safe} = 1$ if the segment was labelled \textit{safe} and $y^{safe} = 0$ otherwise. We assume there exists an underlying ground truth cost function $c_{gt}(s,a) \in [0,1]$ based on which the evaluator provides feedback. Then, the probability that a state is \textit{safe} is defined as $p^{safe}_{gt}(s,a)=\mathbb{I}[c_{gt}(s,a)=0]$. 

Now, let $p^{safe}(s,a)$ represent our estimate of $p^{safe}_{gt}(s,a)$ that we intend to estimate from the collected feedback. Then, by definition of how the feedback is collected, the probability that a trajectory segment $\tau_{i:j}$ is labelled as \textit{safe} is given by, 
\begin{align}
\label{eq:tau_safe}
p^{safe}(\tau_{i:j}) &= \prod_{t=i}^{j} p^{safe}(s_t,a_t)
\end{align}
We can infer $p^{safe}(s,a)$ by minimizing the likelihood loss, 
\begin{align}
\label{eq:mle}
    L^{mle} &=  -E_{(\tau_{i:j}, y^{safe}) \sim P} \left [ {y^{safe}} \log{p^{safe}(\tau_{i:j})} + {(1-y^{safe})} \log{1-p^{safe}(\tau_{i:j})} \right ] \NNN
    % &=  -E_{(\tau_{i:j}, y^{safe}) \sim P}  \left [ {y^{safe}} \log{\prod_{t=i}^{j} p^{safe}(s_t,a_t)} + {(1-y^{safe})} \log{(1-\prod_{t=i}^{j} p^{safe}(s_t,a_t))} \right ] \NNN 
    &= -E_{(\tau_{i:j}, y^{safe}) \sim P}  \left [ {y^{safe}} \sum_{t=i}^{j} \log{ p^{safe}(s_t,a_t)} + {(1-y^{safe})} \log{(1-\prod_{t=i}^{j} p^{safe}(s_t,a_t))} \right ] 
\end{align}

Directly minimizing Eq \ref{eq:mle} is challenging as the term $\prod_{t=i}^{j} p^{safe}(s_t,a_t)$ would collapse to $0$ when the segment length is long, causing unstable gradients. To address this issue, we propose using a surrogate loss function where we replace $1-\prod_{t=i}^{j} p^{safe}(s_t,a_t)$ by $\prod_{t=i}^{j} (1-p^{safe}(s_t,a_t))$.
\begin{align}
\label{eq:cross_entropy}
    L^{sur} &= -E_{(\tau_{i:j}, y^{safe}) \sim P} \left [ {y^{safe}} \sum_{t=i}^{j} \log{ p^{safe}(s_t,a_t)} + {(1-y^{safe})} \sum_{t=i}^{j} \log{(1 - p^{safe}(s_t,a_t))} \right ] \NNN
     % &= -\frac{1}{|P|} \sum_{t=i}^j \sum_{s, a} \mathbb{I}[(s_t, a_t)=(s,a)]\left[{y^{safe}}\log{p^{safe}(s_t,a_t)} + (1-y^{safe})\log{1-p^{safe}(s_t,a_t)}\right] \NNN
     &= -E_{(\tau_{i:j}, y^{safe}) \sim P} \sum_{t=i}^j \sum_{s, a} \mathbb{I}[(s_t, a_t)=(s,a)]\Bigg[\mathbb{I}[y^{safe}=1]\log{p^{safe}(s,a)} \NNN
     &\qquad + \mathbb{I}[y^{safe}=0]\log{(1-p^{safe}(s,a)})\Bigg] \NNN
     &= -\left [E_{(s,a)\sim d_g}  \log{p^{safe}(s,a)}  + E_{(s,a) \sim d_b}\log{(1-p^{safe}(s,a)}) \right]
     % &= -\frac{1}{|P|} \left[ E_{(s,a) \sim d_{g}} \log{p^{safe}(s,a)} + E_{(s,a) \sim d_{b}} \log{1-p^{safe}(s,a)} \right]
\end{align}
where $d_{g}(s,a)=E_{(\tau_{i:j}, y_{safe})\sim P} \left[\sum_{t=i}^{j}\mathbb{I}[(s_t, a_t)=(s,a) \cap y^{safe}=1]\right]$ and $d_{b}(s,a) = E_{(\tau_{i:j}, y_{safe})\sim P} \left[\sum_{t=i}^{j}\mathbb{I}[(s_t, a_t)=(s,a)\cap y^{safe}=0]\right]$ represent the densities with which states occur in \textit{safe} and \textit{unsafe} segments respectively.

The surrogate loss reformulates the objective from the segment level—where collapsing probabilities over long segments can be problematic—to the state level, where this issue does not occur. Optimizing Eq \ref{eq:cross_entropy} involves breaking down the segments into individual states and assigning each state the label of the segment. Subsequently, states are uniformly sampled at random, and the binary cross-entropy loss is minimized. Consequently, $L^{sur}$ represents a binary classification problem in which one class contains \textit{noisy} labels. While \textit{unsafe} states are always accurately labeled—since their presence necessitates an \textit{unsafe} classification for the segment—a \textit{safe} state may receive conflicting labels based on the status of the segment it belongs to. Thus, with a sufficient number of samples, we believe it becomes feasible to reliably differentiate between \textit{safe} and \textit{unsafe} states.

% Every \textit{unsafe} state is always labelled correctly, as its occurrence forces the evaluator to label the segment \textit{unsafe}. A \textit{safe} state is correctly labeled \textit{safe} when it appears in a segment labeled \textit{safe} by the evaluator. However, if it occurs in a segment labeled \textit{unsafe}, it is incorrectly labelled \textit{unsafe}. We posit that with a sufficient number of samples, we will be able to distinguish the safe states from unsafe states. This is because we observe that: (i) safe states will appear in both safe and unsafe segments; and (ii) unsafe states would appear only in unsafe segments and not in safe segments;” Hence with a sufficient number of samples, it would become feasible to distinguish safe from the unsafe states.
% We posit that with a sufficient number of samples, the number of correct labels will outweigh the number of incorrect labels for each state. 

\begin{proposition}
\label{prop:upper_bound}
    The surrogate loss $L^{sur}$ is an upper bound on the likelihood loss $L^{mle}$.
\end{proposition}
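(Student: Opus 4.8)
The plan is to reduce the comparison to a single elementary inequality about products of numbers in $[0,1]$. First I would observe that $L^{mle}$ and $L^{sur}$ are assembled from the same two ingredients, a ``safe-segment'' term and an ``unsafe-segment'' term, and that the safe-segment contributions are \emph{identical}: using $p^{safe}(\tau_{i:j})=\prod_{t=i}^{j}p^{safe}(s_t,a_t)$ (cf. Eq.~\ref{eq:tau_safe} and the expansion in Eq.~\ref{eq:mle}), we have $y^{safe}\log p^{safe}(\tau_{i:j})=y^{safe}\sum_{t=i}^{j}\log p^{safe}(s_t,a_t)$. Hence the entire difference comes from the unsafe term:
\begin{align}
L^{sur}-L^{mle} = E_{(\tau_{i:j},y^{safe})\sim P}\left[(1-y^{safe})\left(\sum_{t=i}^{j}\log(1-p^{safe}(s_t,a_t)) - \log\left(1-\prod_{t=i}^{j}p^{safe}(s_t,a_t)\right)\right)\right].
\end{align}
By linearity of expectation and since $1-y^{safe}\ge 0$, it suffices to establish, for each segment, the pointwise inequality $\sum_{t=i}^{j}\log(1-p_t)\le \log\!\left(1-\prod_{t=i}^{j}p_t\right)$, where $p_t\triangleq p^{safe}(s_t,a_t)\in[0,1]$.

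Next I would prove the key lemma: for any finite collection $p_i,\dots,p_j\in[0,1]$,
\begin{align}
\prod_{t=i}^{j}(1-p_t)\ \le\ 1-\prod_{t=i}^{j}p_t .
\end{align}
The quickest route is probabilistic: let $B_i,\dots,B_j$ be independent Bernoulli variables with $\Pr[B_t=1]=p_t$; then $\prod_t p_t=\Pr[\text{all }B_t=1]$, so $1-\prod_t p_t=\Pr[\text{some }B_t=0]$, whereas $\prod_t(1-p_t)=\Pr[\text{all }B_t=0]$, and ``all $B_t=0$'' is contained in ``some $B_t=0$''. A purely algebraic alternative is to factor out one term: each remaining factor lies in $[0,1]$, so $\prod_t p_t\le p_i$ and $\prod_t(1-p_t)\le 1-p_i$, and adding gives $\prod_t p_t+\prod_t(1-p_t)\le 1$. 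Applying the monotone map $\log$ to the lemma yields exactly the pointwise inequality above, and plugging back into the expectation gives $L^{sur}\le L^{mle}$ for the displayed log-likelihood expressions; since these are used as \emph{losses to be minimized} (i.e. up to the standard negative-log-likelihood sign), the inequality flips to $L^{sur}\ge L^{mle}$, which is the claim.

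I do not expect a serious obstacle here; the only things to be careful about are (i) the bookkeeping that makes the safe-segment terms cancel \emph{exactly}, so that the surrogate is a genuine bound on the likelihood rather than an unrelated quantity, and (ii) keeping the direction of the inequality consistent with the paper's sign convention for ``loss''. Everything else is linearity of expectation, monotonicity of $\log$, and the one-line product inequality. As a byproduct one can also track the slack $\sum_{t}\log(1-p_t)-\log(1-\prod_t p_t)$, which vanishes precisely when the segment has length one (or, degenerately, when at most one $p_t<1$), showing that the surrogate is tight exactly in the state-level feedback regime and that the looseness grows with segment length.
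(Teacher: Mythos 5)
Your proof is correct and follows essentially the same route as the paper: the safe-segment terms of $L^{sur}$ and $L^{mle}$ coincide exactly, so the difference reduces to the unsafe term, and then to the pointwise inequality $\prod_{t}(1-p_t)\le 1-\prod_{t}p_t$ for $p_t\in[0,1]$. The only genuine difference is how that elementary lemma is established: the paper applies the AM--GM inequality twice and adds the two bounds, whereas your union-bound argument (``all Bernoullis zero'' is contained in ``some Bernoulli zero'') and your factor-out-one-term argument ($\prod_t p_t\le p_i$ and $\prod_t(1-p_t)\le 1-p_i$, then add) are both shorter and more transparent; either would be a drop-in replacement for the paper's lemma. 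You are also right to be careful about the sign convention: the paper writes $L^{mle}$ and $L^{sur}$ as log-likelihood expressions but treats them as losses to be minimized (note the leading minus sign in its computation of $L^{sur}-L^{mle}$), and your handling is consistent with that. One small quibble with a side remark: the claim that the slack vanishes ``when at most one $p_t<1$'' is not right for segments of length at least two (e.g.\ $p_1<1$, $p_2=1$ gives $1-p_1p_2=1-p_1$ while $(1-p_1)(1-p_2)=0$); but this byproduct observation plays no role in the proof of the proposition.
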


Thus, minimizing $L^{sur}$ guarantees an upper bound on the likelihood loss of the estimated cost function. 

Having discussed the surrogate loss, we now examine the characteristics of its optimal solution. 
\begin{proposition}
    The optimal solution to Eq \ref{eq:cross_entropy} yields the estimate, 
    \begin{align}
    \label{eq:sol}
    p^{safe}_{*}(s,a) = \frac{d_{g}(s,a)}{d_{g}(s,a) + d_{b}(s,a)}
\end{align}

\end{proposition}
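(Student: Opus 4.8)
The plan is to recognize that, once $L^{sur}$ has been rewritten in its final form as a sum of two expectations (the last line of Eq.~\ref{eq:cross_entropy}), the objective decouples completely across state--action pairs, so the optimization can be carried out pointwise. Concretely, I would first rewrite the two expectations as a single weighted sum,
\begin{align}
\label{eq:Lsur_sum}
L^{sur} = \sum_{(s,a)} \Big[ d_g(s,a)\log p^{safe}(s,a) + d_b(s,a)\log\big(1-p^{safe}(s,a)\big)\Big],
\end{align}
using the definitions of $d_g$ and $d_b$. Assuming the estimator $p^{safe}$ is expressive enough to assign an independent value $p \in [0,1]$ to each $(s,a)$ (the tabular/realizable case), maximizing $L^{sur}$ reduces to maximizing, separately for every $(s,a)$ with $d_g(s,a)+d_b(s,a)>0$, the scalar function $g(p) = d_g(s,a)\log p + d_b(s,a)\log(1-p)$ on $(0,1)$.

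Next I would do the elementary one-variable calculus: $g$ is strictly concave on $(0,1)$ since $g''(p) = -d_g(s,a)/p^2 - d_b(s,a)/(1-p)^2 < 0$, and $g(p)\to-\infty$ at both endpoints, so it has a unique interior maximizer found by setting $g'(p) = d_g(s,a)/p - d_b(s,a)/(1-p) = 0$. Solving gives $d_g(s,a)(1-p) = d_b(s,a)\,p$, hence $p = d_g(s,a)/\big(d_g(s,a)+d_b(s,a)\big)$, which is exactly Eq.~\ref{eq:sol}. Collecting the pointwise maximizers yields the claimed optimal estimate $p^{safe}_*$.

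The only real subtlety — and the point I would flag explicitly — is the decoupling step: it is valid precisely because there are no cross-state constraints tying the values $p^{safe}(s,a)$ together, i.e.\ in the tabular (or sufficiently rich function-class) regime; with a restricted parametric family the stated formula is the unconstrained optimum that the loss drives the estimate toward but need not be exactly attainable. I would also note the harmless edge case where $d_g(s,a)+d_b(s,a)=0$: such pairs never appear in any sampled segment, contribute nothing to $L^{sur}$, and their value of $p^{safe}$ is left undetermined by the data. Everything else is routine, so I expect no genuine obstacle beyond stating these caveats cleanly.
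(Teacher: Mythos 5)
Your proof is correct and takes essentially the same route as the paper's, which simply states that the result follows by differentiating Eq.~\ref{eq:cross_entropy} with respect to $p^{safe}(s,a)$ and setting the derivative to zero. Your version just spells out the details the paper omits (the pointwise decoupling, the concavity check guaranteeing the stationary point is the unique maximizer, and the $d_g+d_b=0$ edge case), all of which are sound.
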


Subsequently, we define the inferred cost function as $c_{*}(s,a) \triangleq \mathbb{I}[p^{safe}_{*}(s,a)<\frac{1}{2}]$. Employing $c_{*}(s,a)$ in policy updates instead of $c_{gt}(s,a)$ introduces a bias in estimating the cost accrued by the policy, that we analyse below. For this analysis, we assume that the feedback is \textit{sufficient}, i.e, the density $d(s,a)$ is greater than zero for every state, otherwise $p^{*}(s,a)$ is not defined. 

\begin{proposition}
\label{prop:over_esimtation}
    For a fixed policy $\pi$, the bias in the estimation of the incurred costs is given by, 
\begin{align}
    E_{\pi}[\gamma^t c_{*}(s,a)] - E_{\pi}[\gamma^t c_{gt}(s,a)] &= E_{(s,a)\sim \rho^{\pi}_{g}} [\mathbb{I}[d_b(s,a) > d_{g}(s,a)]]
\end{align}
where $\rho^{\pi}_{g}(s,a) = E_{\pi}[\sum_{t=0}^T \gamma^t [\mathbb{I}[(s_t,a_t)=(s,a) \cap c_{gt}(s,a)=0]]$ is the occupancy measure of \textit{safe} states visited by $\pi$. 
\end{proposition}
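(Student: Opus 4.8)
The plan is to reduce the whole statement to a pointwise comparison of $c_*$ and $c_{gt}$, using the closed form from the preceding proposition together with the structure of the feedback protocol. First I would rewrite the inferred cost: since $p^{safe}_{*}(s,a) = d_{g}(s,a)/(d_{g}(s,a)+d_{b}(s,a))$, the test $p^{safe}_{*}(s,a) < \tfrac12$ is equivalent to $2d_g(s,a) < d_g(s,a)+d_b(s,a)$, i.e. to $d_b(s,a) > d_g(s,a)$. Hence $c_{*}(s,a) = \mathbb{I}[d_b(s,a) > d_g(s,a)]$ at every $(s,a)$, this being well defined because the sufficiency assumption gives $d(s,a) = d_g(s,a)+d_b(s,a) > 0$ everywhere.

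The key structural step is to show that $c_*$ is in fact \emph{exact} on unsafe state–action pairs, so that the bias is one-sided. By the feedback rule (equivalently, by Eq.~\ref{eq:tau_safe}, where $p^{safe}_{gt}(\tau_{i:j})$ is a product of $\{0,1\}$ factors), a segment is labelled \emph{safe} precisely when it contains no unsafe state; therefore any pair $(s,a)$ with $c_{gt}(s,a)=1$ can occur only inside segments labelled \emph{unsafe}, which forces $d_{g}(s,a)=0$. Combined with sufficiency ($d_b(s,a) = d(s,a) > 0$ there), this yields $c_{*}(s,a) = \mathbb{I}[d_b(s,a) > 0] = 1 = c_{gt}(s,a)$ on every unsafe pair. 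Consequently the pointwise discrepancy $c_{*}(s,a) - c_{gt}(s,a)$ vanishes on unsafe pairs and equals $c_{*}(s,a) = \mathbb{I}[d_b(s,a) > d_g(s,a)]$ on safe pairs (where $c_{gt}=0$). In one line, $c_{*}(s,a) - c_{gt}(s,a) = \mathbb{I}[c_{gt}(s,a)=0]\,\mathbb{I}[d_b(s,a) > d_g(s,a)]$.

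Then I would push this identity through the expectation. Writing the bias as $E_{\tau\sim\pi}\big[\sum_{t}\gamma^{t}\big(c_{*}(s_t,a_t)-c_{gt}(s_t,a_t)\big)\big]$, I substitute the pointwise identity and expand $\mathbb{I}[(s_t,a_t)=(s,a)]$ to obtain $\sum_{s,a}\Big(E_{\tau\sim\pi}\big[\sum_{t}\gamma^{t}\,\mathbb{I}[(s_t,a_t)=(s,a)\,\cap\,c_{gt}(s,a)=0]\big]\Big)\,\mathbb{I}[d_b(s,a)>d_g(s,a)]$; the bracketed factor is exactly $\rho^{\pi}_{g}(s,a)$ by definition, so the sum is $E_{(s,a)\sim\rho^{\pi}_{g}}[\mathbb{I}[d_b(s,a)>d_g(s,a)]]$, which is the claimed right-hand side.

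The main obstacle is the second paragraph — that is where all the content sits; the rest is bookkeeping. The crucial point is that the ``an unsafe state anywhere in the segment makes the whole segment unsafe'' labelling rule forces $d_g=0$ on unsafe pairs, so $c_*$ cannot under-estimate and the bias arises purely from safe pairs that happen to be visited more often inside unsafe segments than inside safe ones. I would also note the harmless boundary case $d_b(s,a)=d_g(s,a)$ (then $c_*(s,a)=0$), which cannot occur on unsafe pairs, and I would make explicit that the argument uses the binary convention $c_{gt}(s,a)\in\{0,1\}$ from Section~\ref{sec:prob_defn}, under which $c_{gt}=\mathbb{I}[c_{gt}\neq 0]=1-p^{safe}_{gt}$, so that ``$E_{\pi}[\gamma^{t}c(s,a)]$'' is read as $\mathcal{J}^{c}(\pi)=E_{\tau\sim\pi}[\sum_{t}\gamma^{t}c(s_t,a_t)]$ and ``$E_{(s,a)\sim\rho^{\pi}_{g}}[\cdot]$'' as the $\rho^{\pi}_{g}$-weighted sum $\sum_{s,a}\rho^{\pi}_{g}(s,a)(\cdot)$.
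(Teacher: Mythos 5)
Your proposal is correct and follows essentially the same route as the paper's proof: derive $c_{*}(s,a)=\mathbb{I}[d_b(s,a)>d_g(s,a)]$ from the closed form of $p^{safe}_{*}$, split the expectation according to $c_{gt}(s,a)=0$ versus $c_{gt}(s,a)=1$, and kill the unsafe-pair term by observing that the labelling rule forces $d_g(s,a)=0$ there so $c_{*}=c_{gt}=1$. Your write-up is in fact slightly more careful than the paper's on the key step (you justify explicitly why unsafe pairs can only appear in unsafe segments, and note the boundary case $d_b=d_g$), but the decomposition and the content are the same.
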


Proposition \ref{prop:over_esimtation} illustrates that $c^{*}(s,a)$ misclassifies certain \textit{safe} states as \textit{unsafe} when their frequency in segments labeled \textit{unsafe} exceeds that in segments labeled \textit{safe} by the evaluator. We contend that this misclassification is likely to diminish with increased data collection or shorter segment lengths. However, it is important to note that this misclassification is guaranteed to be zero only when the segment length is reduced to one, meaning feedback is provided at the state level.

Additionally, note that the bias is non-negative, meaning that the expected cost $E_{\pi}[c_*(s,a)]$ acts as an upper bound on the true cost incurred by $\pi$. Therefore, ensuring that the policy does not exceed the threshold $c_{max}$ on $c_*$ guarantees that it adheres to the threshold on $c_{gt}$.

\begin{corollary}
\label{cor:safety}
    Any policy $\pi$ that is \textit{safe} w.r.t $c_*$ is guaranteed to be \textit{safe} w.r.t $c_{gt}$. 
\end{corollary}

In practice, we represent $p^{safe}_{\theta}(s,a)$ using a neural network with parameters $\theta$. The resulting cost function is defined as $c_{\theta}(s,a) \triangleq \mathbb{I}[p^{safe}(s,a)]<\frac{1}{2}$.

\subsection{Efficient Subsampling of Trajectories}
\label{sec:sampling}
To reduce the burden on the evaluator and minimize the cost of feedback, we present a subset of the trajectories collected by the policy for feedback. The common approach is to break the problem into two parts: (1) define a schedule $N_{queries}(i)$ that determines the number of trajectories to be shown to the user at the end of each data collection round $i$. Subsequently, $N_{queries}(i)$ trajectories are sampled from those collected by the policy at data collection round $i$. While the ideal goal is to sample a subset of trajectories that maximizes the \textit{expected value of information} \cite{evoi}, achieving this is computationally intractable \cite{evoi_imp}. To address this challenge, various sampling methods have been employed, seeking to maximize a surrogate measure of this value. Among these, \textit{uncertainty sampling} stands out as the most prominent approach, wherein trajectories are sampled based on the estimator's uncertainty about their predictions \cite{prefrlmain, prefrl1, prefrl2}. However, quantifying the uncertainty is challenging given the lack of calibration in neural network predictions. To address this challenge, ensemble methods are frequently employed where the disagreement among the models is used as an uncertainty measure. However, the training of $n$ distinct neural networks can exact substantial resource costs, prompting consideration for alternative approaches.

In light of this, we introduce a new form of \textit{uncertainty sampling} called \textit{novelty sampling}. With \textit{novelty sampling}, we gather all the \textit{novel} trajectories after each round and present them to the evaluator for feedback. Formally, we define a state as novel if its density in the feedback data collected so far $d(s) = \sum_a d(s,a)$ is $0$. A trajectory is deemed \textit{novel} if it comprises of at least $e$ novel states. This can be interpreted as ensuring that the \textit{edit distance}—a well-known measure of trajectory distance \cite{traj_dist}—between the current trajectory and previously seen trajectories exceeds a threshold $e$. We do not consider novelty for state-action pairs as we found that extending to this case adversely impacted the performance.

The central notion is that the model is prone to errors on \textit{novel} states- those it has not encountered during training. This arises because the policy evolves over time, venturing into states that were not previously encountered during data collection rounds. Therefore, this sampling strategy effectively reduces the \textit{epistemic uncertainty} of the model—error arising from insufficient training data—thereby making it a form of \textit{uncertainty sampling}. Furthermore, this sampling method offers the advantage of implicitly establishing a decreasing querying schedule as novelty of trajectories reduces over time as shown in Figure \ref{fig:novelty_ratio} in the Appendix.

We compute the density $d(s)$ through a count-based method, utilizing a hashmap to track the frequency of state occurrences in trajectories presented to the evaluator. Employing SimHash \cite{simhash}, we discretize the state space using a hash function $\phi: S \rightarrow \{-1, 1\}^n$, which maps \textit{locally} similar states (measured by angular distance) to a binary code as:
\begin{align}
    \phi(s) = sgn(Ag(s))  
\end{align}
where $g:S \rightarrow \mathbb{R}^d$ is an optional prepossessing function and $A$ is an $n \times d$ matrix with i.i.d entries drawn from a standard normal distribution. Also, note that $n$ controls the granularity of the hash function, i.e, the number of states mapped to the same value. In our setting, $g(s)=s$ as we observed no improvement when employing functions like autoencoders for feature extraction.

\subsection{Policy Optimization}
After the data collection round, where we sample trajectories $\{\tau\}$ and their corresponding rewards $\{r(\tau)\}$ using $\pi$, we estimate their costs $c(\tau)$ using the inferred cost function. Our proposed method allows for the policy to be updated utilizing any on-policy constrained RL algorithm. In this study, we employ the PPO-Lagrangian algorithm\cite{ppolag} that combines the PPO algorithm \cite{ppo} with a lagrangian multiplier to ensure safety.

A detailed description of the proposed method can be found in Algorithm \ref{alg:rlsf}. Lines [6-17] describe the data and feedback collection stage, and Lines [18-23] describe the cost inference and policy improvement stage.

\begin{algorithm}
\caption{Reinforcement Learning from Safety Feedback (RLSF)}
\label{alg:rlsf}
\begin{algorithmic}[1]
    \State \textbf{Input:} cost threshold $c_{max}$, segment length $k$, novelty criterion $e$
    \State \textbf{Initialize:} policy $\pi_{0}$ 
    \State \textbf{Initialize:} classifier $c_{\theta}$, learning rate $lr_{\theta}$ and feedback buffer $D$
    \State \textbf{Initialize:} $A \in \mathbb{R}^{n \times d}$ with entries sampled i.i.d from $\mathcal{N}(0,1)$, $\phi(.) = sgn(A^T(.))$, density map $d(.) \equiv 0$.
    \While{not converged}
    \State Collect trajectories $\{\tau\}, \{r(\tau)\} \sim \pi$  \Comment{Data Collection}
    \For{each trajectory $\tau^{i} \in \{\tau\}$} \Comment{Feedback Collection}
        \State novel $\leftarrow \text{True if} \Hquad \exists \Hquad \text{e states } \{s_e\} \in \tau^i \Hquad \text{such that} \Hquad d(\phi(s_e)) > 0$
        \If {novel}
        \State Show $\tau^{i}$ to the evaluator
        \For {each segment $\tau_{j:j+k-1} \in \tau^i$}
            \State Obtain feedback $y^{safe}$ for the segment $\tau^{i}_{j:j+k-1}$.
            \State $D \leftarrow D \cup \{((s,a),y^{safe})\ \Hquad \forall (s,a) \in \tau^{i}_{j:j+k-1}\}$
            \State $d(\phi(s)) \leftarrow d(\phi(s))+1 \Hquad \forall s \in \tau^{i}_{j:j+k-1}$ \Comment{Update the densities}
        \EndFor
        \EndIf
    \EndFor
    \For {each gradient step} \Comment{Update cost estimates}
        \State Sample random minibatch $b \leftarrow \{(s,a), y_{safe}\} \sim D$
        \State $\theta \leftarrow \theta - lr_{\theta} \nabla L^{sur}(b)$ 
    \EndFor
    \State Infer costs $\{c_{\theta}(\tau_i)\}$ for all $\tau^{i} \sim \{\tau\}$. 
    \State Update $\pi$ using $\{r(\tau_i)\}$ and $\{c_{\theta}(\tau_i)\}$. \Comment{Policy Improvement}
  \EndWhile
\end{algorithmic}
\end{algorithm}
\section{Experiments}
\label{sec:experiments}
We investigate the following questions in our experiments: 
\begin{enumerate}
    \item Does RLSF succeed in effectively learning safe behaviours?
    \item Can the inferred cost function be transferred across agents in the same task?
    \item How does the proposed novelty based sampling scheme compare with other methods used in the literature?
    \item How accurate is inferred cost function compared to the true cost function?
    \item How can we address the overestimation bias of the inferred cost function as described in Section \ref{sec:theory}?
\end{enumerate}

\subsection{Experiment Setup}

\begin{table}[htbp]
    \centering
    \caption{Performance of different algorithms on the Safety Benchmarks. The first $7$ environments represent the \textit{hard} constraint case. The remaining environments illustrate the \textit{soft} constraint case, with values in brackets indicating the cost threshold. Each algorithm is run for $6$ independent seeds. \textcolor{Orange}{(orange)} and \textcolor{ProcessBlue}{(blue)} indicate the best performance in the known costs and inferred costs settings, respectively. Algorithms with a cost violation (C.V) rate below $1\%$ are deemed to have equal performance in terms of safety.}
    \label{tab:main_results}
    \resizebox{\textwidth}{!}{%
    \begin{tabular}{@{}ccccccc@{}}
        \toprule
        \multirow{2}{*}{Environment} & & \multicolumn{2}{c}{Cost Known (Best Run)} & \multicolumn{3}{c}{Cost Inferred (Mean $\pm$ Standard error)} \\
        \cmidrule(lr){3-4} \cmidrule(lr){5-7} 
        & & PPOLag & SIMKC & SDM & SIM & RLSF (Ours)\\
        \midrule
        \multirow{2}{*}{Point Circle} & Return & $45.26$ & $\color{Orange}{\textbf{46.09 }}$ & $36.20 \pm 3.95$ & $22.26 \pm 9.59$ &  $\color{ProcessBlue}{\textbf{36.42}} \pm \textbf{1.78}$ \\
        & C.V Rate ($\%$) & $0.4$ & $\color{Orange}{\textbf{0.43}} $ & $11.43 \pm 0.69$ & $35.21 \pm 10.09$ &  $\color{ProcessBlue}{\textbf{1.9}}\pm\textbf{0.09}$ \\

        \midrule
        \multirow{2}{*}{Car Circle} & Return & $\color{Orange}{\textbf{14.34}}$ & $15.21 $ & $5.18 \pm 2.48$ & $6.34 \pm 2.87$ &  $\color{ProcessBlue}{\textbf{9.37}}\pm\textbf{0.97}$  \\
        & C.V Rate ($\%$) & $\color{Orange}{\textbf{0.84}} $ & $5.4 $ & $6.2 \pm 6.18$ & $4.53 \pm 4.00$ & $\color{ProcessBlue}{\textbf{0.54}}\pm\textbf{0.30}$ \\
        \midrule
        \multirow{2}{*}{Biased Pendulum} & Return & $717.43$ & $\color{Orange}{\textbf{983.27}} $ & $495.58\pm160.84$ & $577.15 \pm 184.31$ & $\color{ProcessBlue}{\textbf{721.48}}\pm\textbf{111.49}$ \\
        & C.V Rate ($\%$) & $0.0$ & $\color{Orange}{\textbf{0.1}}$  & $39.91 \pm 17.05$ & $48.58 \pm 21.67$ & $\color{ProcessBlue}{\textbf{0}}\pm\textbf{0}$ \\
        \midrule
        \multirow{2}{*}{Blocked Swimmer} & Return & $22.62 $ & $\color{Orange}{\textbf{21.05}}$ & $86.96 \pm 10.69$ & $2.15 \pm 8.58$ & $\color{ProcessBlue}{\textbf{16.09}}\pm\textbf{1.44}$  \\
        & C.V Rate ($\%$) & $3.91 $ & $\textcolor{Orange}{\textbf{0.01}}$ & $92.8 \pm 1.65$ & $13.33 \pm 12.11$ & $\color{ProcessBlue}{\textbf{0.01}} \pm \textbf{0.01}$ \\
        \midrule
        \multirow{2}{*}{HalfCheetah} & Return & $\color{Orange}{\textbf{2786.71}}$ & $2497.82$ & $3031.7 \pm 336.48$ & $257.34 \pm 147.35$ & $\color{ProcessBlue}{\textbf{2112.63}} \pm \textbf{161.26}$  \\
        & C.V Rate ($\%$) & $\color{Orange}{\textbf{0.42}} $ & $0.06$ & $59.4 \pm 8.28$ & $0.0 \pm 0.0$ & $\color{ProcessBlue}{\textbf{0.06}} \pm \textbf{0.01}$ \\
        \midrule
        \multirow{2}{*}{Hopper} & Return & $\color{Orange}{\textbf{1705.00}} $ & $1555.25$ & $1097.57 \pm 56.35$ & $990.08 \pm 8.66$ & $\color{ProcessBlue}{\textbf{1408.71}} \pm \textbf{27.3}$  \\
        & C.V Rate ($\%$) & $\color{Orange}{\textbf{0.19 }}$ & $0.02$ & $0.0 \pm 0.0$ & $0.0 \pm 0.0$ & $\color{ProcessBlue}{\textbf{0.29}} \pm \textbf{0.02}$ \\
        \midrule
        \multirow{2}{*}{Walker2d} & Return & $\color{Orange}{\textbf{2947.25}}$ & 2925.23 & $2195.94 \pm 134.21$ & $993.38 \pm 17.69$ & $\color{ProcessBlue}{\textbf{2783.29}} \pm \textbf{57.51}$  \\
        & C.V Rate ($\%$) & $\color{Orange}{\textbf{0.16}} $ & $0.0$ & $1.58 \pm 1.53$ & $0.0 \pm 0.0$ & $\color{ProcessBlue}{\textbf{0.05}} \pm \textbf{0.01}$ \\
        \midrule
        \multirow{2}{*}{Point Goal} & Return & $\color{Orange}{\textbf{26.16}} $ & $26.10 $ & $1.61 \pm 1.8149$ & $10.86 \pm 4.1$ & $ \color{ProcessBlue}{\textbf{24.65}} \pm \textbf{0.59}$  \\
        & Cost $(40.0)$ & $\color{Orange}{\textbf{34.19}}$ & $31.83 $ & $30.57 \pm 13.29$ & $52.76 \pm 12.85$ & $\color{ProcessBlue}{\textbf{35.08}} \pm \textbf{1.08}$  \\
        \midrule
        \multirow{2}{*}{Car Goal} & Return & $27.37 $ & $\color{Orange}{\textbf{26.44}} $ & $1.05 \pm 2.83$ & $\color{ProcessBlue}{\textbf{10.88}} \pm \textbf{7.1}$ & $24.28 \pm 2.1$  \\
        & Cost $(40.0)$ & $41.67 $ & $\color{Orange}{\textbf{35.41}} $ & $34.71 \pm 9.87$ & $\color{ProcessBlue}{\textbf{33.33}} \pm \textbf{11.26}$ & $41.25 \pm 2.27$  \\
        \midrule
        \multirow{2}{*}{Point Push} & Return & $6.00 $ & $\color{Orange}{\textbf{10.84}} $ & $0.16 \pm 0.14$ & $3.63 \pm 1.77$ & $\color{ProcessBlue}{\textbf{2.68}} \pm \textbf{1.03}$  \\
        & Cost $(35.0)$ & $26.08 $ & $\color{Orange}{\textbf{26.96}} $ & $22.89 \pm 5.95$ & $45.43 \pm 3.86$ &$\color{ProcessBlue}{\textbf{30.51}} \pm\textbf{3.4} $  \\
        \midrule
        \multirow{2}{*}{Car Push} & Return & $\color{Orange}{\textbf{3.07}} $ & $2.68 $ & $-3.04 \pm 3.3$ & $1.56 \pm 0.46$ & $\color{ProcessBlue}{\textbf{1.54}} \pm \textbf{0.51}$  \\
        & Cost $(35.0)$  & $\color{Orange}{\textbf{20.53}} $ & $20.95 $ & $23.25 \pm 7.78$ & $36.55 \pm 1.48$ & $\color{ProcessBlue}{\textbf{27.69}} \pm \textbf{1.19}$  \\

        \bottomrule
    \end{tabular}
    }%
\end{table}

We evaluate RLSF on multiple continuous control benchmarks in the Safety Gymnasium environment \cite{safety_gymnasium} and Mujoco \cite{mujoco} based environments introduced in \cite{icrl2}. The \textit{Circle}, \textit{Blocked Swimmer} and \textit{Biased Pendulum} environments constrain the position of the agent whereas the \textit{Half Cheetah}, \textit{Hopper} and \textit{Walker-2d} environments constrain the velocity of the agent. The \textit{Goal} and \textit{Push} tasks are the most challenging as they contain static and dynamic obstacles that the agent must avoid while completing the task.All of the above environments reflect safety challenges that an agent could potentially face in real-world scenarios. Additionally, we conduct experiments using the Driver simulator introduced in \cite{driver_env_citation}, which presents two scenarios that an autonomous driving agent is likely to encounter on the highway: lane changes and blocked paths. In this setup, the cost function is based on multiple variables, including speed, position, and distance to other vehicles. Additionally, we introduce a third scenario: overtaking on a two-lane highway. A detailed description of the tasks can be found in the Appendix \ref{sec:envs}. 

We split the environments into two settings: (1) \textit{hard constraint} setting ($c_{max} = 0$) where the safety of the policy is measured in terms of the cost violation (CV) rate defined as the number of cost violations divided by the length of the episode and (2) \textit{soft} constraint setting where $c_{max} > 0$. We utilize an automated script that leverages the underlying cost function to simulate the feedback provided by the evaluator.

We compare the performance of our algorithm against the following baselines:
\textbf{Self Imitation Safe Reinforcement Learning (SIM)} \cite{sim}:
    SIM is a state-of-the-art method in constrained RL that also supports the case where feedback is elicited from an external evaluator. Similar to RLSF, the method consists of two stages, a data collection/feedback stage and a policy optimization stage. In the first stage, a trajectory $\tau$ is labelled as \textit{good} if $[r(\tau) \geq r_{good} \Hquad \cap \Hquad \mathbb{I}[c(\tau)\leq c_{max}]$, and labelled as \textit{bad} if $[r(\tau) \geq r_{bad} \Hquad \cup \Hquad \mathbb{I}[c(\tau)\geq c_{max}]$, where $r_{good}$ and $r_{bad}$ are predefined thresholds on the reward. The information $\mathbb{I}[c(\tau)\leq c_{max}]$ is received from feedback. The idea is then to imitate the \textit{good} trajectories and stay away from the \textit{bad} trajectories. If $\rho^{\pi}$ is the occupancy measure of the current policy $\pi$ and $\rho^{G}, \rho^{B}$ are the occupancy measures of the \textit{good} and \textit{bad} trajectories respectively, then, $\pi$ is optimized as,
    \begin{align}
        \pi^* = \underset{\pi}{\text{argmax}} \Hquad \text{KL}(\rho^{\pi, G}||\rho^B) 
    \end{align} where $\rho^{\pi, G} = (\rho^{\pi} + \rho^{G})/2$ and KL denotes the Kullback–Leibler divergence.
    
\textbf{Safe Distribution Matching (SDM)}: SIM combines rewards with safety feedback into a joint notion of $good$ and $bad$ which may not be desirable when the cost function is unknown. Thus, we introduce an additional baseline that keeps these two signals separate by labelling the trajectory as \textit{good} if $c(\tau)\leq c_{max}$ and \textit{bad} otherwise. The policy is then updated as, 
    \begin{align}
        \pi^* = \underset{\pi}{\text{argmax}} \Hquad r + \lambda\text{KL}(\rho^{\pi, G}||\rho^B)
    \end{align} where $\lambda \in [0,1]$ controls the tradeoff between the two objectives.

As an upper bound, we compare the performance of our algorithm to scenarios where the cost function is known: PPO-Lagrangian (PPOLag) \cite{ppolag} and SIM with known costs (SIMKC) \cite{sim}. The objective is not to surpass their performance, but to match it. Therefore, when reporting the results of these algorithms, we present the best-performing seed across multiple runs.

 All results presented use novelty-based sampling unless stated otherwise. We use a segment length of $1$ in the \textit{Driver}, \textit{Goal} and \textit{Push} environments. This is because the Safety Goal and Push environments contain small obstacles that the agent interacts with for very brief periods of time, hence requiring more fine-grained feedback. An example of this is present in Figure \ref{fig:complexity} in the Appendix. In the \textit{Driver} environments, a randomly initialized policy was \textit{highly unsafe}. Thus a long segment length would force the evaluator to label every segment \textit{unsafe}, making cost inference infeasible. \textbf{In the remaining environments, the segment length corresponds to the length of the episode.} Details on the number of queries generated for feedback can be found in Table \ref{tab:trajectories} in the Appendix. We grant the two baseline methods (with unknown costs) an advantage by providing feedback for every trajectory generated.

\subsection{Cost Inference across various tasks}
\paragraph{Benchmark Environments}
Table \ref{tab:main_results} presents the performance of the various algorithms on the benchmark environments. RLSF significantly outperforms the two baselines in terms of reward and safety in \textit{all} of the environments \footnote{except the \textit{Car Goal} environment where RLSF and PPOLag marginally exceed the threshold}. RLSF comes to within $\approx80\%$ of the performance of the best run of PPOLag in $7/11$ environments thereby underscoring its effectiveness in learning safe policies.

\begin{figure}
    \centering
    \includegraphics[width=\textwidth]{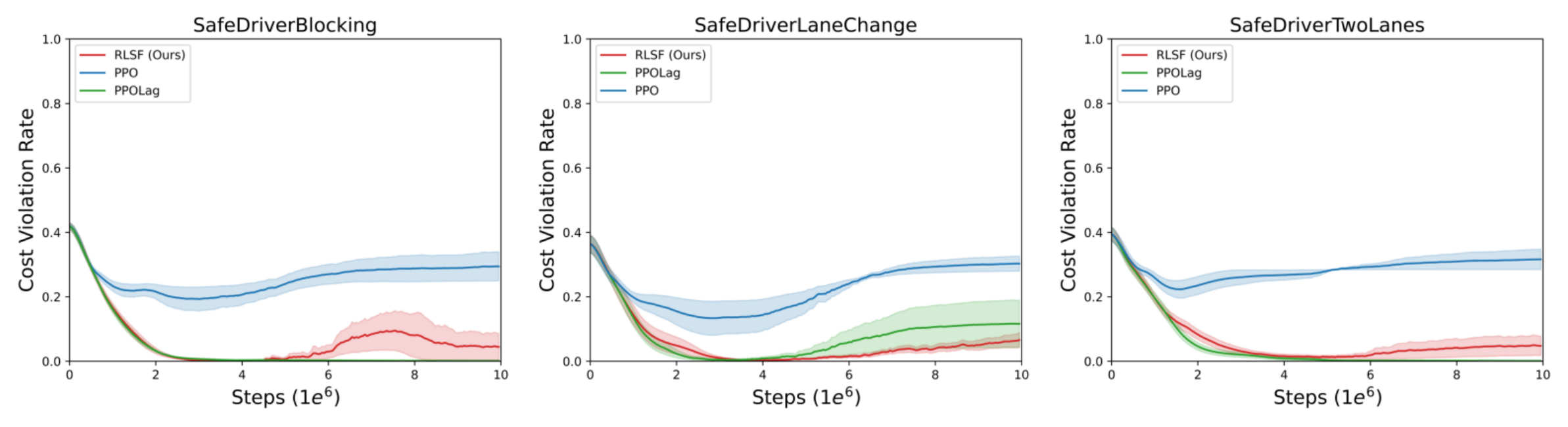}
    \caption{Cost Violation rate of different algorithms in the Driver environment. Each algorithm is run for $6$ independent seeds, with the curves representing the mean and the shaded regions indicating the standard error.}
    \label{fig:driver_cost}
\end{figure}

\paragraph{Driving Scenarios}
In the driving environments, we evaluate the performance of our algorithm against two baseline methods: a naive PPO policy that solely maximizes reward and the PPO-Lag algorithm. Figure \ref{fig:driver_cost} presents these results. Notably, our method demonstrates comparable safety performance to PPO-Lag while significantly outperforming PPO in terms of cost violations. This highlights the effectiveness of our approach in learning safe autonomous driving policies. Video clips showcasing the learned policies in the $3$ scenarios are included in the supplementary material.

\subsection{Cost Transfer}
We also showcase the potential of using the inferred cost function to train an agent with a different embodiment or morphology to solve the same task from scratch, without requiring any additional feedback. We utilize the inferred cost function obtained during the training of a \textit{Point} agent to train the \textit{Doggo} agent for the \textit{Circle} and \textit{Goal} tasks. The observation space in the environments consists of agent-related telemetry (acceleration, velocity) and task-related information (lidar sensors for goal/boundary/obstacle detection). In these experiments, we do not incorporate agent-related information when learning the cost function, ensuring that it can be transferred to a new agent. Next, the inferred cost function remains fixed during the training of the second agent, and the task-related features are utilized for cost inference. We compare the performance of this approach with that of the PPO-Lagrangian algorithm that utilizes the underlying cost function of the environment. From Table \ref{tab:transfer_results} we can infer that agents trained using transferred cost function are comparable in performance to agents trained using the \textit{true} underlying cost function.

\begin{table}[htbp]
    \centering
    \caption{Comparison of PPOLag performance when trained with the underlying task cost function versus the transferred cost function. Results are averaged over three independent seeds.}
    \resizebox{\textwidth}{!}{%
    \begin{tabular}{@{}llccc@{}}
        \toprule
        Source Env & Target Env & &  PPOLag with true cost & PPOLag with transferred cost \\
        \midrule
        \multirow{2}{*}{Point Circle} & \multirow{2}{*}{Doggo Circle} & Return & $2.69 \pm 0.24$ & $2.00 \pm 0.27$ \\
        & & C.V Rate $(\%)$ & $0.63 \pm 0.09$ & $0.18 \pm 0.05$ \\
        \midrule
        \multirow{2}{*}{Point Goal} & \multirow{2}{*}{Doggo Goal} & Return & $1.45 \pm 0.06$ & $1.20 \pm 0.26$ \\
        & & Cost (40.0) & $40.86 \pm 4.80$ & $37.31 \pm 6.77$ \\
        \bottomrule
    \end{tabular}
    \label{tab:transfer_results}
    }
\end{table}

\begin{figure}
    \centering
    \includegraphics[width=\textwidth]{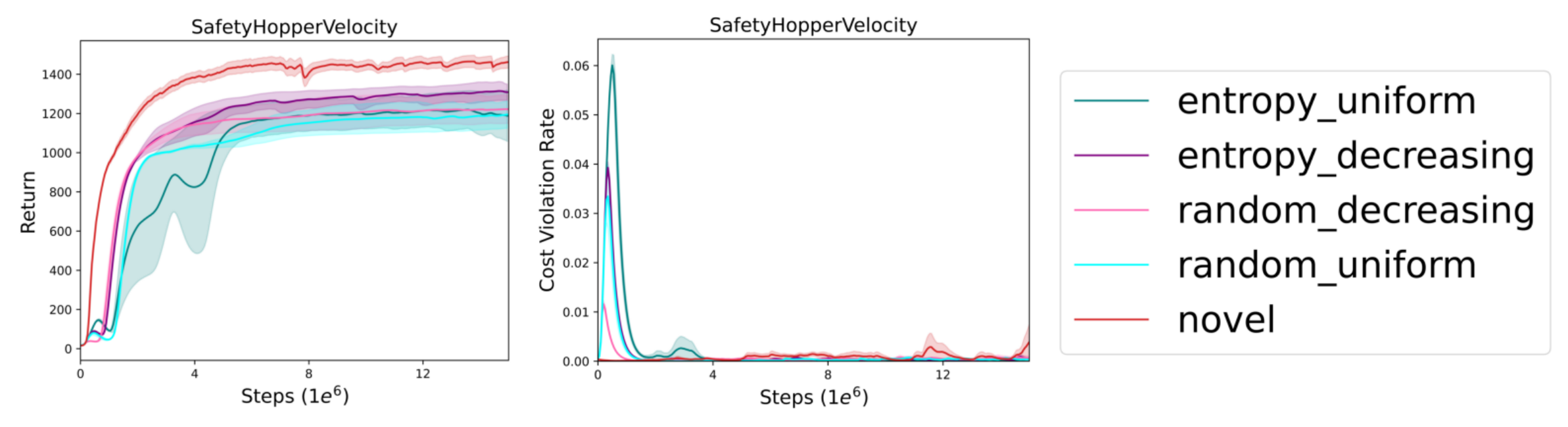}
    \caption{Comparison of different sampling and scheduling schemes. Results are averaged over $3$ independent seeds. The proposed sampling method generates on average $\approx1950$ queries, hence for fair comparison the other methods were given a budget of $2000$ queries.}
    \label{fig:novelty_ablation}
\end{figure}

\subsection{Effect of Novelty Sampling}
Next, we demonstrate the effectiveness of the proposed \textit{novelty-based sampling} mechanism by comparing it with other popular querying methods from the literature \cite{prefrl1, prefrl2}. These methods require a predefined budget for the number of trajectories presented to the evaluator. We evaluate two schedules: (1) \textit{Uniform schedule}, where a fixed number of trajectories is shown for feedback after each round, and (2) \textit{Decreasing schedule}, where the number of trajectories decreases in proportion to $\frac{1}{t}$ each round. For each schedule, we test the following strategies for sampling the subset of trajectories to be presented to the evaluator: (a) \textit{Random sampling}, which selects a random subset of trajectories, and (b) \textit{Entropy sampling}, where trajectories are sampled in descending order of the average entropy $\mathcal{H}(p^{safe}_{\theta}(s_t, a_t))$ of the estimator. As shown in Figure \ref{fig:novelty_ablation}, entropy based sampling outperforms random sampling, and a decreasing schedule outperforms a uniform schedule. However, all the methods fall short compared the proposed \textit{novelty based sampling} mechanism.

\begin{wrapfigure}{R}{0.35\textwidth}
    \vspace{-20pt}
    \centering
    \includegraphics[width=0.35\textwidth]{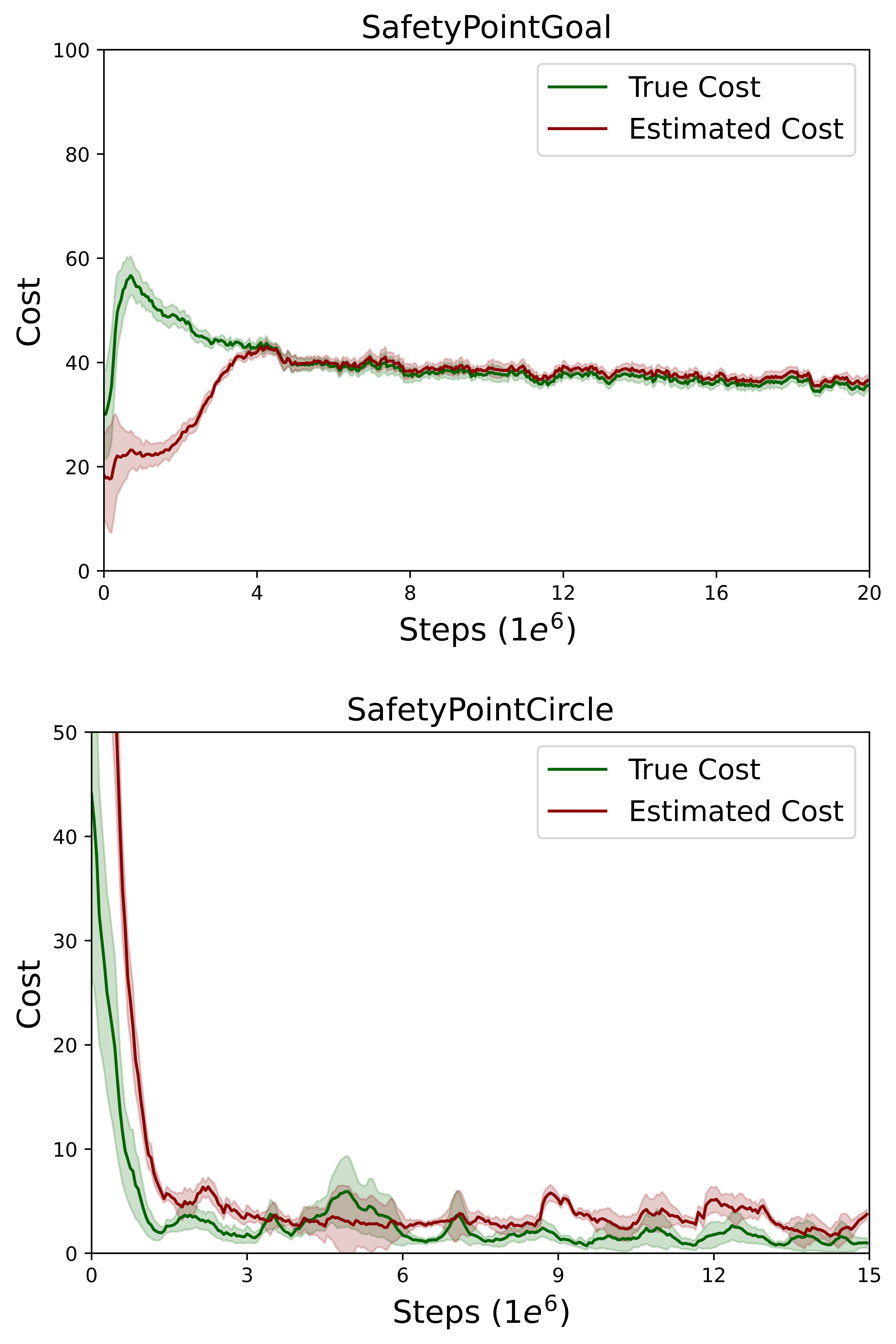}
    \caption{Comparing the inferred cost to the true cost.}
    \label{fig:cost_comp}
    \vspace{-30pt}
\end{wrapfigure}

\subsection{Analyzing the Inferred Cost function}
To further evaluate the quality of the inferred reward function, we plot it alongside the true underlying cost function of the environment as presented in Figure \ref{fig:cost_comp}. In the Point Goal environment, where feedback is collected for each state, the inferred cost function closely aligns with the true cost given sufficient data. However, in the Point Circle environment, where feedback is collected at the trajectory level, the inferred cost function exhibits a slight overestimation bias, as discussed in Section \ref{sec:theory}. Nevertheless, from Figure \ref{fig:cost_comp}, we can conclude that our method learns cost functions that strongly correlate with the true costs.

\subsection{Bias Correction}
In Section \ref{sec:theory}, we established that when feedback is collected over segments, the inferred cost function tends to exhibit an overestimation bias. As a result, optimizing for a policy that is deemed \textit{safe} with respect to $c_{max}$ using the inferred cost function can lead to overly conservative policies. If the bias $b$, could be calculated apriori, then adjusting the safety threshold to $c_{max} + b$ would still satisfy the safety guarantees outlined in Corollary \ref{cor:safety}. However, since estimating $b$ in advance is infeasible, we propose a heuristic approach where a constant $\delta \in \mathbb{R^+}$ is added to $c_{max}$ to account for the bias. It’s important to note that Corollary \ref{cor:safety} only holds if $\delta \leq b$, meaning this heuristic doesn’t guarantee that the resulting policy will always be safe. Nevertheless, we tested this approach with different $\delta$ values in the \textit{Car Circle} environment, as this setting exhibited a higher overestimation bias (as reflected in Table \ref{tab:main_results}). The results, shown in Figure \ref{fig:cost_bias}, demonstrate that adding this bonus does improve performance (with higher $\delta$ values yielding better results), though it introduces the additional challenge of tuning $\delta$.

\begin{wrapfigure}{R}{0.35\textwidth}
    \vspace{-20pt}
    \centering
    \includegraphics[width=0.35\textwidth]{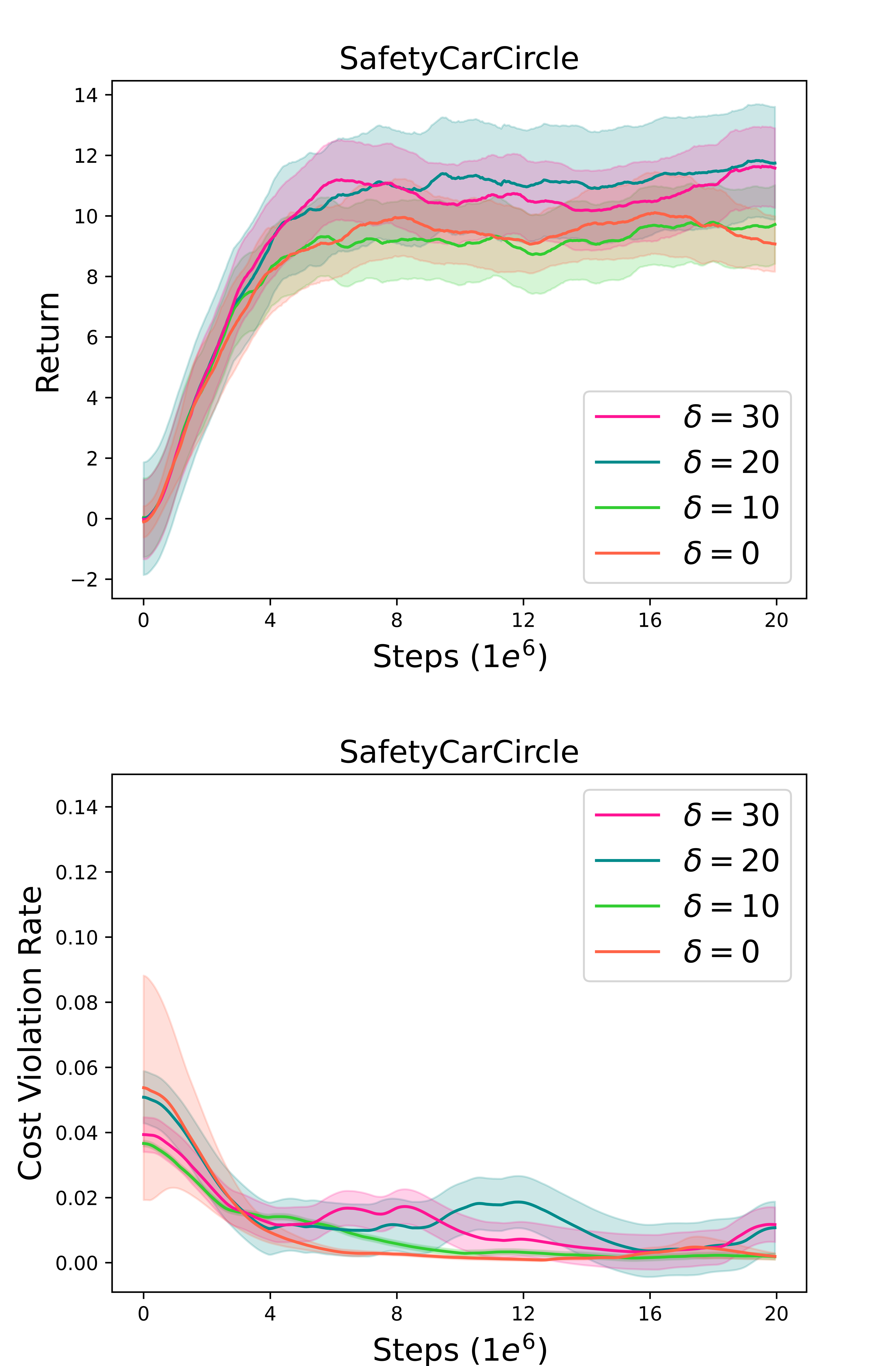}
    \caption{Increasing $c_{max}$ by $\delta$ to correct for the overestimation bias.}
    \label{fig:cost_bias}
    % \vspace{-10pt}
\end{wrapfigure}

\section{Limitations, Future Work and Broader Impact}
\label{sec:limited_work}
In this work, we consider the problem of learning cost functions from feedback in scenarios where evaluating or defining such functions is either costly or infeasible. We present a novel framework which significantly reduces the burden on the evaluator by eliciting feedback over extended horizons, a challenge that has not been addressed in previous research. To further reduce their load, we propose a \textit{novelty-based sampling} method that only presents previously unseen trajectories to the evaluator. Through experiments on multiple benchmark environments, we demonstrate the effectiveness of our framework and the proposed sampling algorithm in learning safe policies. However, there are a few limitations we would like to acknowledge. First, our method relies on state-level feedback in some environments, which can be expensive to obtain. Second, while we simulate feedback using the true cost function, real-world feedback is often noisy, especially when provided by human evaluators. It would be valuable to conduct experiments involving real human subjects to validate the approach. 

% Finally, while our theoretical results explain the bias introduced by collecting feedback over longer horizons, we have not explored more robust strategies to mitigate this bias beyond a simple heuristic. Investigating such approaches could improve our method's performance and presents a compelling direction for future research. 

% Furthermore, we would like to highlight the broader societal impact of this research. 
By enabling safer autonomous systems, our approach has the potential to enhance safety across a range of applications, including autonomous vehicles, drones used for delivery, and industrial robots operating in environments like warehouses or manufacturing plants. These improvements could significantly reduce accidents, protect human operators, and increase the overall reliability of automated systems.
% However, the reliance on learned cost functions also raises ethical considerations regarding accountability and liability, which must be carefully addressed.

\section{Acknowledgments}
This research/project is supported by the National Research Foundation Singapore and DSO National Laboratories under the AI Singapore Programme (AISG Award No: AISG2-RP-2020-016) and Lee Kuan Yew Fellowship awarded to Pradeep Varakantham.
% \input{Template/relatedwork}
% \clearpage
\bibliographystyle{abbrv}
\bibliography{main.bib}
\clearpage
\appendix
\section{Proofs}
\label{sec:proofs}
\setcounter{proposition}{0}
\begin{proposition}
    The surrogate loss $L^{sur}$ is an upper bound on the likelihood loss $L^{mle}$.
\end{proposition}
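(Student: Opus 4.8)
The plan is to observe that the two objectives are term-by-term identical except in the contribution of segments labelled \emph{unsafe} ($y^{safe}=0$), and to reduce the comparison of those two contributions to a single elementary inequality about products of numbers in $[0,1]$.

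Concretely, write $p_t \triangleq p^{safe}(s_t,a_t) \in [0,1]$ for $t = i,\dots,j$. The integrands of $L^{mle}$ (Eq.~\ref{eq:mle}, last line) and of $L^{sur}$ (Eq.~\ref{eq:cross_entropy}, first line) agree on the $y^{safe}$ part, namely $y^{safe}\sum_{t=i}^j \log p_t$, so it suffices to compare $\log\bigl(1-\prod_{t=i}^j p_t\bigr)$ with $\sum_{t=i}^j \log(1-p_t) = \log\prod_{t=i}^j (1-p_t)$. Since $\log$ is increasing, the claim follows once I establish
\begin{align}
\label{eq:key}
\prod_{t=i}^j (1-p_t) \;\le\; 1 - \prod_{t=i}^j p_t, \qquad p_t \in [0,1],
\end{align}
equivalently $\prod_{t=i}^j (1-p_t) + \prod_{t=i}^j p_t \le 1$. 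The cleanest route I would take to \eqref{eq:key} is probabilistic: let $X_i,\dots,X_j$ be independent Bernoulli variables with $\mathbb{P}(X_t=1)=p_t$; then $\prod_t p_t = \mathbb{P}(X_t=1\ \forall t)$ and $\prod_t (1-p_t) = \mathbb{P}(X_t=0\ \forall t)$ are probabilities of disjoint events and hence sum to at most $1$. (An equally short alternative is induction on the number of factors: the single-factor case is an equality, and for the inductive step, fixing the first $n$ factors with $a \triangleq \prod(1-p_t)$, $b \triangleq \prod p_t$, $a+b\le 1$, the quantity $a(1-p_{n+1})+b\,p_{n+1}$ is affine in $p_{n+1}$, hence bounded on $[0,1]$ by $\max\{a,b\}\le a+b\le 1$.)

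Given \eqref{eq:key}, monotonicity of $\log$ yields $\sum_{t=i}^j \log(1-p_t) \le \log\bigl(1-\prod_{t=i}^j p_t\bigr)$; multiplying by $(1-y^{safe})\ge 0$, adding the common $y^{safe}$ term, and taking expectation over $(\tau_{i:j},y^{safe})\sim P$ — each of which preserves the inequality — gives the relationship between $L^{sur}$ and $L^{mle}$ asserted by the proposition (under the standard convention that these are losses, i.e.\ negative log-likelihoods to be minimized, so that ``$\le$'' on the per-segment integrands becomes ``$L^{sur}$ is an upper bound on $L^{mle}$'').

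I do not expect a genuine obstacle here; the only points needing care are (i) that the estimator outputs $p^{safe}(s,a)$ lie in $[0,1]$, so every logarithm and every factor $1-p_t$ is well defined and \eqref{eq:key} applies (true for a sigmoid-headed classifier $p^{safe}_\theta$), and (ii) tracking the sign/loss convention so the inequality is stated in the intended direction.
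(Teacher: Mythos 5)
Your proof is correct and follows the same overall structure as the paper's: both isolate the $(1-y^{safe})$ term as the only place the two objectives differ, and both reduce the claim to the elementary inequality $1-\prod_{t=i}^{j} p_t \geq \prod_{t=i}^{j}(1-p_t)$ for $p_t\in[0,1]$, with monotonicity of $\log$ and the sign convention (negative log-likelihood) handling the rest. The only genuine divergence is in how that key inequality is established. The paper proves it via two applications of the AM--GM inequality (bounding $\prod x_i$ by $(\frac{1}{n}\sum x_i)^n \leq \frac{1}{n}\sum x_i$ and likewise for $\prod(1-x_i)$, then adding), whereas you observe directly that $\prod_t p_t$ and $\prod_t(1-p_t)$ are the probabilities of the disjoint events ``all $X_t=1$'' and ``all $X_t=0$'' for independent Bernoulli$(p_t)$ variables, hence sum to at most $1$ (or, equivalently, your one-line induction). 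Your argument is shorter and arguably more transparent; it also sidesteps a small subtlety in the paper's route, namely that the step $(\frac{1}{n}\sum x_i)^n \leq \frac{1}{n}\sum x_i$ silently uses $\frac{1}{n}\sum x_i \in [0,1]$. Both are fully rigorous; nothing is missing from your version, and your explicit attention to the loss-sign convention is warranted since the paper's Eq.~(3)--(4) are written as log-likelihoods while the appendix manipulates their negatives.
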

\begin{proof}
We have, 
\begin{align*}
    L^{sur}-L^{mle} &= -E_{(\tau_{i:j}, y^{safe}) \sim P}(1-y^{safe})\Bigg[ \sum_{t=i}^j \log{(1-p^{safe}(s_t, a_t))} \\
    & \quad -\log{(1-\prod_{t=i}^{j} p^{safe}(s_t,a_t))} \Bigg]\\
    &= E_{(\tau_{i:j}, y^{safe}) \sim P} (1-y^{safe})\left[\log{\left(\frac{1-\prod_{t=i}^{j} p^{safe}(s_t,a_t)}{\prod_{t=i}^j 1-p^{safe}(s_t, a_t)}\right)}\right]
\end{align*}

Notice that the term, $$\log{\left(\frac{1-\prod_{t=i}^{j} p^{safe}(s_t,a_t)}{\prod_{t=i}^j 1-p^{safe}(s_t, a_t)}\right)} \geq 0$$ This is because $$1-\prod_{t=i}^{j} p^{safe}(s_t,a_t) \geq \prod_{t=i}^j 1-p^{safe}(s_t, a_t)$$ which we prove subsequently.

\begin{lemma}
    For any sequence of numbers $\{x_1, \ldots, x_n\}$, where $x_{i} \in [0,1]$ for all $i \in {1, \ldots, n}$ we have $1-\prod_{i=1}^{n} x_i \geq \prod_{i=1}^n (1-x_i)$.
\end{lemma}
\begin{proof}
Using the AM-GM inequality we have, 
\begin{align}
\label{eq:am-gm-1}
    \sqrt[n]{\prod_{i=1}^{n} x_i} &\leq \frac{\sum_{i=1}^n x_i}{n} \NNN
    1-\prod_{i=1}^{n} x_i &\geq 1 - \left(\frac{\sum_{i=1}^n x_i}{n}\right)^{n}  \geq 1 - \left(\frac{\sum_{i=1}^n x_i}{n}\right)
\end{align}

Similarly, we have,
\begin{align}
\label{eq:am-gm-2}
    \sqrt[n]{\prod_{i=1}^{n} (1-x_i)} &\leq \frac{\sum_{i=1}^n (1-x_i)}{n} \NNN
    - \prod_{i=1}^{n} (1-x_i) &\geq - \left(\frac{\sum_{i=1}^n (1-x_i)}{n}\right)^{n} \geq - \left(\frac{\sum_{i=1}^n (1-x_i)}{n}\right) 
\end{align}

Adding Eq \ref{eq:am-gm-1} and Eq \ref{eq:am-gm-2}, we get, 
\begin{align}
    1-\prod_{i=1}^{n} x_i - \prod_{i=1}^{n} (1-x_i) &\geq 1 - \left(\frac{\sum_{i=1}^n x_i}{n}\right) - \left(\frac{\sum_{i=1}^n (1-x_i)}{n}\right) \NNN
    1-\prod_{i=1}^{n} x_i - \prod_{i=1}^{n} (1-x_i) &\geq 0 \NNN
    1-\prod_{i=1}^{n} x_i &\geq \prod_{i=1}^{n} (1-x_i) 
\end{align}

\end{proof}
\end{proof}

\begin{proposition}
    The optimal solution to Eq \ref{eq:cross_entropy} yields the estimate, 
    \begin{align*}
    p^{safe}_{*}(s,a) = \frac{d_{g}(s,a)}{d_{g}(s,a) + d_{b}(s,a)}
\end{align*}

\end{proposition}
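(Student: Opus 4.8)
The plan is to treat Eq \ref{eq:cross_entropy} in its final form, $L^{sur} = E_{(s,a)\sim d_g}[\log p^{safe}(s,a)] + E_{(s,a)\sim d_b}[\log(1-p^{safe}(s,a))]$, and observe that since $p^{safe}$ is a free function (a separate real parameter for each state-action pair), the optimization decouples completely across $(s,a)$. So I would first rewrite the objective as a sum over state-action pairs: $L^{sur} = \sum_{s,a}\bigl[d_g(s,a)\log p^{safe}(s,a) + d_b(s,a)\log(1-p^{safe}(s,a))\bigr]$, and then minimize (the paper writes ``minimizing'', though with these signs it is really a maximization of a log-likelihood-shaped objective — I would state it consistently as maximizing $\sum_{s,a} d_g \log p + d_b \log(1-p)$) each summand independently.

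Next I would do the one-variable calculus: fix $(s,a)$, write $p = p^{safe}(s,a)$, $\alpha = d_g(s,a)$, $\beta = d_b(s,a)$, and consider $h(p) = \alpha\log p + \beta\log(1-p)$ on $(0,1)$. Differentiating, $h'(p) = \alpha/p - \beta/(1-p)$, which vanishes exactly when $\alpha(1-p) = \beta p$, i.e. $p = \alpha/(\alpha+\beta) = d_g(s,a)/(d_g(s,a)+d_b(s,a))$. I would then check the second-order condition: $h''(p) = -\alpha/p^2 - \beta/(1-p)^2 < 0$ whenever $\alpha,\beta \geq 0$ and not both zero, so $h$ is strictly concave and the critical point is the unique global maximizer on $(0,1)$. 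Since the per-coordinate maxima are attained simultaneously by the function $p^{safe}_*(s,a) = d_g(s,a)/(d_g(s,a)+d_b(s,a))$, this is the optimal solution, which is the claim.

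The genuinely delicate point — the one I expect to be the main obstacle, or at least the thing worth a sentence — is the decoupling/``free function'' assumption and the boundary behaviour. The decoupling is only valid because the optimization is over arbitrary functions $p^{safe}:\mathcal{S}\times\mathcal{A}\to[0,1]$ (the tabular/unconstrained setting), not over a restricted parametric family such as a neural network; I would state this hypothesis explicitly. Second, the closed-form maximizer needs $d_g(s,a)+d_b(s,a)>0$, which is exactly the ``sufficient feedback'' assumption invoked just after the proposition (every state appears in the feedback data), and one should note that the maximization is genuinely over the open interval — if some $d_g$ or $d_b$ is zero the optimum is pushed to the boundary $0$ or $1$, still consistent with the formula interpreted as a limit. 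With those caveats recorded, the argument is a short standard computation, so I would keep the write-up to the rewriting step, the one-line derivative, and a one-line concavity remark.
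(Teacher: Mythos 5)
Your proposal is correct and follows the same route as the paper, whose entire proof is the one-line remark that one differentiates Eq.~\ref{eq:cross_entropy} with respect to $p^{safe}(s,a)$ and sets the derivative to zero. Your write-up simply fills in the details the paper omits — the pointwise decoupling over $(s,a)$, the concavity check, the sign convention, and the $d_g+d_b>0$ caveat — all of which are sound and worth recording.
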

\begin{proof}
    % Eq \ref{eq:cross_entropy} can be further written as, 
    % \begin{align}
    % \label{eq:cross_entropy2}
    %     L^{sur} &= -\frac{1}{|P|} E_{(s,a)\sim d(s,a)}[\mathbb{I}[y^{safe}=1]\log{p^{safe}(s,a)} + \mathbb{I}[y^{safe}=0](1-\log{p^{safe}(s,a)})] \NNN
    %     &= -\frac{1}{|P|} E_{(s,a)\sim d_{g}(s,a)}[\log{p^{safe}(s,a)}] + E_{(s,a)\sim d_{b}(s,a)}[\log{1 - p^{safe}(s,a)}]
    % \end{align}
    % where $d_{g}(s,a)=E_{(\tau_{i:j}, y_{safe})\sim P} \left[\sum_{t=i}^{j}\mathbb{I}[(s_t, a_t)=(s,a) \cap y^{safe}=1]\right]$ and $d_{b}(s,a) = E_{(\tau_{i:j}, y_{safe})\sim P} \left[\sum_{t=i}^{j}\mathbb{I}[(s_t, a_t)=(s,a)\cap y^{safe}=0]\right]$. 
   The proof is obtained by differentiating Eq \ref{eq:cross_entropy} w.r.t $p^{safe}(s,a)$ and setting it to $0$.
\end{proof}

\begin{proposition}
    For a fixed policy $\pi$, the bias in the estimation of the incurred costs is given by, 
\begin{align*}
    E_{\pi}[c_{*}(s,a)] - E_{\pi}[c_{gt}(s,a)] &= E_{(s,a)\sim d^{\pi}_{g}} [\mathbb{I}[d_b > d_{g}]]
\end{align*}
where $\rho^{\pi}_{g}(s,a) = E_{\pi}[\sum_{t=0}^T[\mathbb{I}[(s_t,a_t)=(s,a) \cap c_{gt}(s,a)=0]]$ is the occupancy measure of true \textit{safe} states visited by $\pi$. 
\end{proposition}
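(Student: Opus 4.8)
The plan is to expand both expectations over state-action pairs using the occupancy measure of $\pi$, and then split the sum according to whether a pair is genuinely \emph{safe} (i.e. $c_{gt}(s,a)=0$) or \emph{unsafe} ($c_{gt}(s,a)=1$). Recall that $c_{*}(s,a) = \mathbb{I}[p^{safe}_{*}(s,a) < \tfrac12]$, and by Proposition 2, $p^{safe}_{*}(s,a) = d_g(s,a)/(d_g(s,a)+d_b(s,a))$, so $c_{*}(s,a) = \mathbb{I}[d_b(s,a) > d_g(s,a)]$ (using the sufficiency assumption that the denominator is positive, and treating the boundary case $d_b = d_g$ as one chooses — I would note this is a measure-zero edge case or fold it into the inequality consistently). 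The key observation I want to establish is that on genuinely unsafe pairs the two cost functions agree: if $c_{gt}(s,a)=1$ then every segment containing $(s,a)$ is labelled unsafe by construction of the feedback, so $d_g(s,a)=0$ and hence $d_b(s,a) > d_g(s,a)$ (assuming $(s,a)$ appears at all, which the sufficiency assumption grants), giving $c_{*}(s,a)=1=c_{gt}(s,a)$.

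The steps, in order: (1) write $E_{\pi}[\gamma^t c_{*}(s,a)] - E_{\pi}[\gamma^t c_{gt}(s,a)] = \sum_{s,a}\rho^{\pi}(s,a)\,(c_{*}(s,a) - c_{gt}(s,a))$; (2) partition the index set into $\{(s,a): c_{gt}(s,a)=1\}$ and $\{(s,a): c_{gt}(s,a)=0\}$; (3) show the contribution from the unsafe set vanishes, using the agreement argument above; (4) on the safe set, $c_{gt}(s,a)=0$, so the difference reduces to $\sum_{c_{gt}(s,a)=0}\rho^{\pi}(s,a)\,c_{*}(s,a) = \sum_{c_{gt}(s,a)=0}\rho^{\pi}(s,a)\,\mathbb{I}[d_b(s,a)>d_g(s,a)]$; (5) recognize $\rho^{\pi}(s,a)\,\mathbb{I}[c_{gt}(s,a)=0]$ as exactly the density $\rho^{\pi}_g(s,a)$ defined in the statement, so the sum equals $E_{(s,a)\sim\rho^{\pi}_g}[\mathbb{I}[d_b(s,a)>d_g(s,a)]]$, which is the claimed expression. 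The nonnegativity remark (and the corollary) then follows immediately since it is an expectation of an indicator.

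The main obstacle is purely bookkeeping around the definitions: making sure that $c_{*}(s,a) - c_{gt}(s,a)$ is genuinely supported only on safe state-action pairs, which hinges on the claim $c_{gt}(s,a)=1 \Rightarrow d_g(s,a)=0$. This is where the construction of the feedback (a segment is labelled unsafe whenever \emph{any} visited state is unsafe) does the real work, and I would state that implication carefully as it is the crux. A secondary subtlety is the sufficiency assumption — since $p^{safe}_{*}$ is only defined when $d(s,a) = d_g(s,a) + d_b(s,a) > 0$, I need every $(s,a)$ in $\mathrm{supp}(\rho^{\pi})$ to have been observed in the feedback data; the paper already flags this assumption, so I would just invoke it. Finally, I should be slightly careful that the factor $\gamma^t$ is handled uniformly: since $c_{*}$ and $c_{gt}$ are both plain functions of $(s,a)$, pulling the discounted occupancy measure $\rho^{\pi}$ out front is legitimate and the $\gamma^t$ is absorbed into $\rho^{\pi}$ (and likewise into $\rho^{\pi}_g$), so the statement is consistent once one reads $E_{\pi}[\gamma^t f(s,a)]$ as shorthand for $\sum_{s,a}\rho^{\pi}(s,a) f(s,a)$.
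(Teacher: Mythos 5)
Your proposal is correct and follows essentially the same route as the paper's proof: decompose the expected difference over the occupancy measure, split by the true label $c_{gt}$, kill the unsafe-set contribution via the key observation that $c_{gt}(s,a)=1 \Rightarrow d_g(s,a)=0 \Rightarrow c_*(s,a)=1$, and identify the remaining term with $E_{(s,a)\sim\rho^{\pi}_g}[\mathbb{I}[d_b>d_g]]$. Your handling of the boundary case $d_b=d_g$ and the sufficiency assumption is, if anything, slightly more careful than the paper's.
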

\begin{proof}
The estimated cost function $c_{*}(s,a) = \mathbb{I}[p^{safe}_*(s,a)<\frac{1}{2}]$, from Eq \ref{eq:sol} we get $p^{safe}_*(s,a)<\frac{1}{2}$ when $d_b(s,a) > d_g(s,a)$. Thus, $c_*(s,a) = \mathbb{I}[d_b(s,a) > d_g(s,a)]$. 
\begin{align*}
    E_{\pi}[c_*(s,a) - c_{gt}(s,a)] &= E_{\tau\sim\pi}\sum_{t=0}^T\left[c_*(s_t,a_t)-c_{gt}(s_t,a_t)\right] \\
    &= E_{\tau\sim\pi} \sum_{t=0}^T \sum_{s,a} \mathbb{I}[(s_t, a_t) = (s,a)][c_*(s_t,a_t)-c_{gt}(s_t,a_t)] \\
   &= E_{\tau\sim\pi} \sum_{t=0}^T \sum_{s,a} \bigg[\mathbb{I}[(s_t, a_t) = (s,a) \cap c_{gt}(s,a)=0]\big[c_*(s_t,a_t)-c_{gt}(s_t,a_t)\big]  \\& \quad +\mathbb{I}[(s_t, a_t) = (s,a) \cap c_{gt}(s,a)=1]\big[c_*(s_t,a_t)-c_{gt}(s_t,a_t)\big]\bigg] \\
   &= E_{\tau\sim\pi} E_{(s,a)\sim \rho^{\pi}_{g}} [c_*(s,a)-0] + E_{(s,a)\sim \rho^{\pi}_{b}} [c_*(s,a)-1]
\end{align*}
where $\rho^{\pi}_{g}(s,a) = E_{\tau\sim\pi}[\sum_{t=0}^T[\mathbb{I}[(s_t,a_t)=(s,a) \cap c_{gt}(s,a)=0]]$ and $\rho^{\pi}_{b}(s,a) = E_{\tau\sim\pi}[\sum_{t=0}^T[\mathbb{I}[(s_t,a_t)=(s,a) \cap c_{gt}(s,a)=1]]$.

Note that when a state is \textit{unsafe}, i.e, $c_{gt}(s,a)=1$, then, $p^{safe}_*(s,a)=0$ as $n_g(s,a)=0$. Thus for all of these states, $c_*(s,a) = 1$. Hence, we are left with, 
\begin{align*}
    E_{\pi}[c_*(s,a) - c_{gt}(s,a)] &= E_{(s,a)\sim \rho^{\pi}_{g}} [c_*(s,a)]\NNN
    &= E_{(s,a)\sim \rho^{\pi}_{g}} [\mathbb{I}[d_b > d_{g}]]
\end{align*}
\end{proof}

\begin{corollary}
    Any policy $\pi$ that is safe w.r.t $c_*$ is guaranteed to be safe w.r.t $c_{gt}$.
\end{corollary}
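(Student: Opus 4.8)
The plan is to derive the corollary directly from Proposition~\ref{prop:over_esimtation} by observing that the bias term is manifestly nonnegative. First I would note that the right-hand side of the bias identity, $E_{(s,a)\sim \rho^{\pi}_{g}}[\mathbb{I}[d_b(s,a) > d_{g}(s,a)]]$, is an expectation of an indicator function, hence it is a sum of nonnegative terms (the occupancy measure $\rho^{\pi}_{g}$ is nonnegative and the indicator is either $0$ or $1$). Therefore $E_{\pi}[\gamma^t c_{*}(s,a)] \geq E_{\pi}[\gamma^t c_{gt}(s,a)]$, i.e. the inferred discounted cost of any policy upper-bounds its true discounted cost.

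The second step is to chain this inequality with the safety definitions from the Constrained MDP preliminaries. Suppose $\pi$ is safe with respect to $c_*$, meaning $\mathcal{J}^{c_*}(\pi) \leq c_{max}$ (equivalently $c_0$ in the notation of the algorithm). By the nonnegativity of the bias we have $\mathcal{J}^{c_{gt}}(\pi) \leq \mathcal{J}^{c_*}(\pi) \leq c_{max}$, so $\pi \in \Pi_{c_{gt}}$, i.e. $\pi$ is safe with respect to $c_{gt}$. That is the entire argument.

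Two small caveats I would flag for rigor. First, Proposition~\ref{prop:over_esimtation} is stated under the ``sufficient feedback'' assumption ($d(s,a) > 0$ everywhere on the support visited), which is needed for $p^{safe}_*$ to be well-defined; I would carry that assumption into the corollary statement or note it inherits it. Second, there is a minor notational mismatch in the excerpt between $E_{\pi}[\gamma^t c(s,a)]$ and $\mathcal{J}^{c}(\pi)$, and between $\rho^\pi_g$ with and without the $\gamma^t$ factor; I would make sure the discounting is consistent throughout so that the bias identity lines up cleanly with the definition $\mathcal{J}^{c}(\pi) = E_{\tau\sim\pi}[\sum_t \gamma^t c(s_t,a_t)]$ used in $\Pi_c$.

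Honestly there is no real obstacle here — the corollary is an immediate consequence of the sign of the bias term established in the preceding proposition, and the only ``work'' is stitching together the definition of a feasible/safe policy with the inequality $\mathcal{J}^{c_{gt}}(\pi) \leq \mathcal{J}^{c_*}(\pi)$. If anything, the part that deserves a sentence of care is justifying that each summand in the bias expression is nonnegative, which follows because occupancy measures are nonnegative and indicators take values in $\{0,1\}$.
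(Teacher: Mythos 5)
Your proof is correct and is essentially identical to the paper's: both invoke the bias identity from Proposition~\ref{prop:over_esimtation}, note that the right-hand side is a nonnegative expectation of an indicator, and conclude $E_{\pi}[c_{gt}] \leq E_{\pi}[c_{*}] \leq c_0$. The caveats you flag (the sufficient-feedback assumption and the discounting notation) are fair observations but do not change the argument.
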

\begin{proof}
    From Proposition \ref{prop:over_esimtation}, we know that $E_{\pi}[c_{*}(s,a)]-E_{\pi}[c_{gt}(s,a)]=E_{(s,a)\sim \rho^{\pi}_{g}} [\mathbb{I}[d_b > d_{g}]]$. Since $E_{(s,a)\sim \rho^{\pi}_{g}} [\mathbb{I}[d_b > d_{g}]] \geq 0$, we have $E_{\pi}[c_{*}(s,a)] \geq E_{\pi}[c_{gt}(s,a)]$. 

    Thus, $E_{\pi}[c_*(s,a)] \leq c_{max} \implies E_{\pi}[c_{gt}(s,a)] \leq c_{max}$.
\end{proof}

\section{Environments}
\label{sec:envs}
\subsection{Benchmark environments:}
\paragraph{Mujoco} \cite{mujoco}-based environments extend traditional locomotion tasks by incorporating safety constraints. These environments are categorized into two types: position-based and velocity-based constraints. A detailed description of each is provided below:
\begin{enumerate}
    \item \textbf{Position Based:} These environments, recently introduced as benchmarks in \cite{icrl2}, are explained in more detail below.
    \begin{enumerate}
        \item \textit{Blocked Swimmer}: The agent controls a robot composed of three segments joined at two points. The objective is to move the robot to the right as quickly as possible by applying torque through rotors at the joints. The agent receives a reward at each time step, proportional to its displacement in the $X$-direction. The episode concludes after 1000 time steps. To increase the challenge, a cost is imposed when the agent's $X$-coordinate exceeds 0.5, constraining it to the region $-\infty \leq X < 0.5$.
        \item \textit{Biased Pendulum}: The agent's goal is to balance a pole on a cart. Each episode ends if the pole falls or after a maximum of 1000 time steps. The agent receives a reward of 1 when its $X$-coordinate is within the range $(-\infty, -0.01]$, which monotonically decreases to 0.1 as the $X$-coordinate approaches 0. Beyond this, the agent receives a constant reward of 0.1, encouraging movement in the leftward direction. To prevent excessive leftward movement, a cost of 1 is incurred when the $X$-coordinate is in the region $(-\infty, -0.015]$, constraining the agent from moving too far left.
    \end{enumerate}

    \item \textbf{Velocity Based:}
    These environments are part of the Safety Gymnasium benchmarks, introducing a velocity constraint for the \textit{Half-Cheetah}, \textit{Hopper}, and \textit{Walker-2d} agents. The agent incurs a cost of 1 whenever it exceeds a velocity threshold of $v_{max}/2$, where $v_{max}$ is the maximum velocity achieved by the agent when trained with PPO for $1 \times 10^{7}$ steps.
\end{enumerate}

\paragraph{Safety Gymnasium} \cite{safety_gymnasium} has emerged as a prominent benchmark for evaluating constrained reinforcement learning algorithms, providing a challenging framework for cost inference. The environments feature multiple agents, including the \textit{Point}, \textit{Car}, and \textit{Doggo}, listed in increasing order of difficulty in controlling the agent. The objective is to complete specific tasks using one of these agents.

In the \textit{Circle} task, the agent must navigate around the boundary of a circle centered at the origin while remaining within a safe region defined by two vertical boundaries at $x = \pm 0.7$. A cost of 1 is incurred each time the agent exceeds this boundary ($|x| \geq 0.7$).

In the \textit{Goal} task, the agent must reach a designated goal area as quickly as possible while avoiding both static and dynamic obstacles, incurring a cost of 1 for each collision.

The \textit{Push} task builds upon the \textit{Goal} task by adding the challenge of pushing a block to the goal area while still avoiding collisions with obstacles in the environment.

\subsection{Safe Driver}
Driving-based environments are emerging as key benchmarks for addressing the problem of constraint inference \cite{icrl2}, particularly due to the complex safety considerations involved in driving. We evaluate our algorithm using the \textit{Driver} simulator introduced in \cite{driver_env_citation, driver_env_citation2} to learn safe behaviors from feedback. This environment encompasses two scenarios that a self-driving agent is likely to encounter on the highway: a blocked road and a lane change. Additionally, we introduce a third scenario that involves overtaking on a two-lane highway, where the agent must safely pass a slower car in its lane by utilizing the second lane. This maneuver poses risks, as traffic in this lane is moving in the opposite direction, necessitating careful execution. Below, we provide a brief description of the environment.

\begin{figure}
    \centering
    \includegraphics[width=\textwidth]{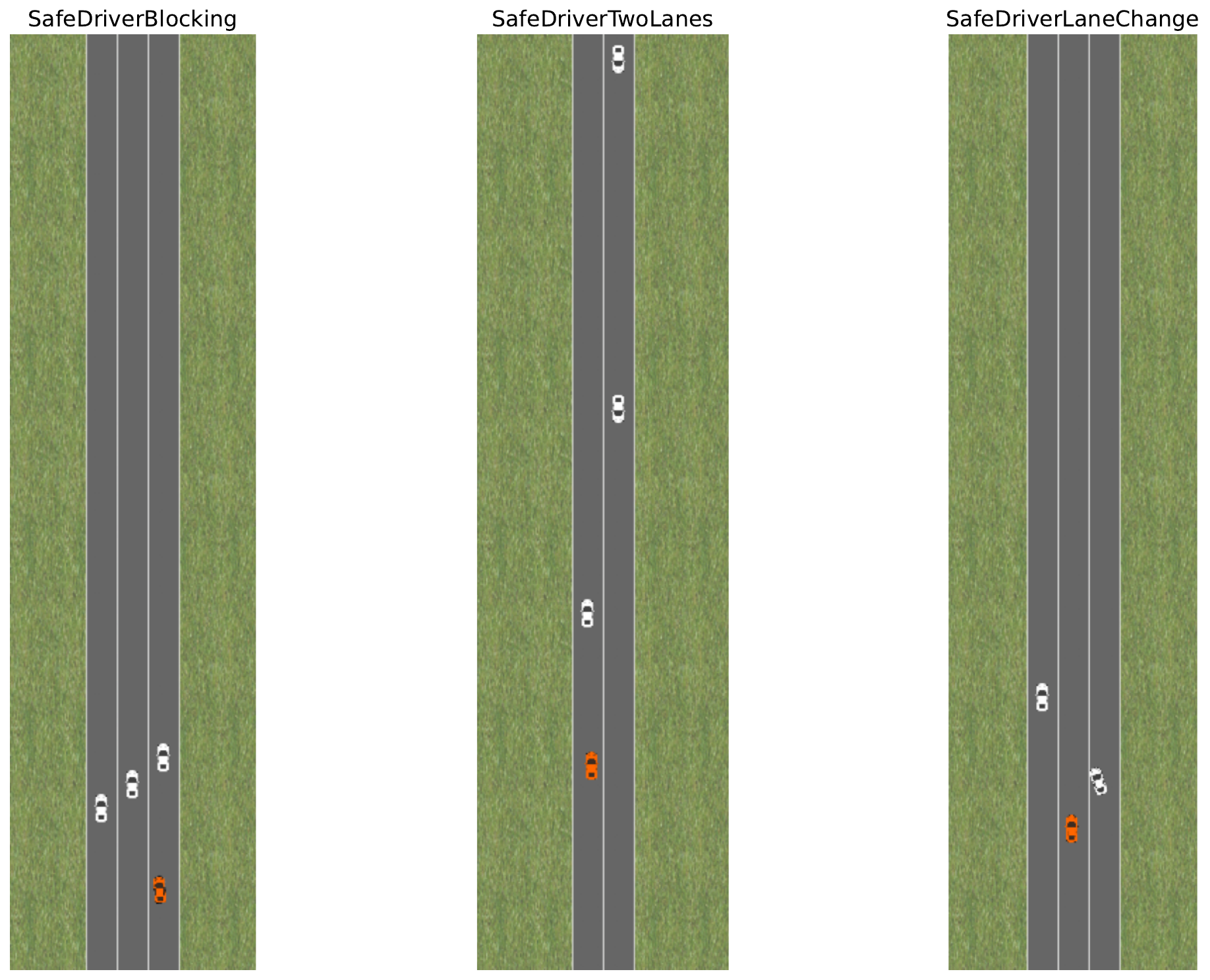}
    \caption{Driver Environments}
    \label{fig:driver_env}
\end{figure}

The Driver environment utilizes point-mass dynamics and features continuous state and action spaces. The state of a vehicle in the simulation is represented by the tuple $(x, y, \phi, v)$, which includes the agent's position $(x,y)$, heading $\phi$ and velocity $v$. The observation of the ego vehicle is formed by concatenating the states of all vehicles in the scene. The environment allows for two actions $(a_1, a_2)$ where  $a_1$ represents the steering input $a_2$ denotes the applied acceleration. 

The state of the agents evolves as, 
\begin{align*}
    s_{t+1} &= (x+\delta x, y+\delta y, \phi+\delta\phi, v+\delta v) \\
    (\delta x, \delta y, \delta \phi, \delta v) &= (v \cos\phi, v\sin\phi,a_2-\alpha v)
\end{align*} where $\alpha$ is used to control the friction. Additionally, the velocity is clipped to the range $[-1,1]$ at each timestep.

The task is to control the \textit{ego} vehicle (white in Figure \ref{fig:driver_env}) to reach the top of the road as soon as possible while adhering to costs on the velocity and proximity to other vehicles. The trajectories of the other vehicles are fixed, with some random noise applied to their steering and acceleration inputs. The agent receives a reward $r_t = 10(y_t - y_{t-1})$ at each time step, incentivizing it to navigate through traffic rapidly. Additionally, it incurs a penalty of $-1$ for going off the road, driving backward, or failing to stay in the center of the lane. A collision results in a penalty of $-100$. The episode terminates when the agent reaches the top of the road or collides with another vehicle. Although these constraints can be integrated into the reward function, they can still lead to unsafe behaviors, highlighting the necessity for an additional cost function. The PPO agent is trained using this reward function. 

In the CMDP settings, the cost function incorporates some of the constraints from the reward function. For instance, the agent incurs a cost of $1$ when it goes off the road or drives backward. Additional costs are incurred when the agent crosses the speed limit $v_{max}$ or gets too close to any nearby vehicles. The \textit{ego} car is deemed to close to a neighbouring vehicle if $\exp{-b(c_1d_{x}^2+c_2d_y^2) + ba} \geq 0.4$, where $a=0.01$, $b=30$, $c_1=10$, $c_2=2$ and $d_{x}$ and $d_{y}$ represent the distance of the nearby vehicle from the \textit{ego} vehicle.

\section{Related Work}

\paragraph{Constrained RL}
The Constrained Markov Decision Process (CMDP) framework \cite{cmdp} has emerged as a valuable tool in safety-critical settings, offering a clear separation between task-related information (rewards) and safety-related information (costs). This separation simplifies environment specification and allows for easier transfer of cost functions across similar environments. While previous research has extensively addressed this problem \cite{cpo, focops, cup, sim}, these studies typically assume that the cost function is known. However, this assumption can be limiting when designing the cost function is complex or costly to evaluate. Consequently, there is a need to infer the cost function from external data in such cases.

\paragraph{Cost Inference}
Historically, cost inference has primarily depended on two types of data: (1) evaluator feedback and (2) constraint abiding demonstrations obtained from an expert.

Previous work on learning from feedback often relies on limiting assumptions such as deterministic transitions and smoothness \cite{costinf1, costinf2, costinf3} or linearity assumptions \cite{costinf4} on the cost function. Moreover, none of these approaches address the problem of actively querying humans to minimize the required feedback. Although this has been theoretically explored in \cite{costinfmain}, the proposed algorithm has not yet been adapted for complex environments. Additionally, these methods are typically limited to feedback on individual state-action pairs, which can be expensive when relying on human evaluators. Our RLSF approach addresses these limitations by reducing the amount of feedback required and avoiding assumptions about the cost function.

Inferring cost functions from constraint-satisfying expert demonstrations has gained significant attention recently \cite{icrlmain, icrl1, icrl2}. However, acquiring such expert demonstrations can be challenging in many scenarios. An alternative form of feedback is human interventions. For instance, \cite{humanint2} rely on the human evaluator to terminate the episode when the agent is about to engage in unsafe behavior, while \cite{humaninter} depend on the human to take control of the agent in such situations and guide it to a safe state. 

\paragraph{Novelty-Based Sampling}
Novelty estimation has been widely explored in the context of active exploration in reinforcement learning. The connection between prediction accuracy and state novelty has been extensively studied, as highlighted by \cite{pred_exploration}. Additionally, state density estimation has been investigated in works such as \cite{hash_exploration, bellemare, exemplar}, where infrequently visited states are assigned an exploration bonus. Notably, \cite{hash_exploration} also uses Simhash \cite{simhash} to maintain state density estimates.

\paragraph{Reward Inference}
This work also connects to the analogous problem of reward inference in standard MDPs, which similarly rely on human feedback \cite{prefrlmain, prefrl1, prefrl2} and expert demonstrations \cite{irlmain, irl1}.

\section{Experiments}
\subsection{Hyperparameters}
\label{sec:hyperparams}
We conducted the experiments on a cluster quipped with $4$ NVIDIA RTX A5000 GPUs and $96$ core CPUs. The experiments took an approximate of two weeks to run. The detailed hyperparameters utilized in the experiments are presented in Table \ref{tab:hyperparameters}. All results presented in both the main paper and the appendix are based on three independent seeds, with the mean and standard error reported, unless explicitly stated otherwise.

\begin{table}[htbp]
    \centering
    \caption{Hyper Parameters}
    \label{tab:hyperparameters}
    \resizebox{\textwidth}{!}{%
    \begin{tabular}{lcccccccccc}
        \toprule
        Hyper Parameter & Point Circle & Car Circle & Biased Pendulum & Blocked Swimmer & Half Cheetah & Hopper & Walker 2d & Point/Car Goal & Point/Car Push & Safe Driver \\
        \midrule
        Actor hidden size & [256, 256, 256] & [256, 256, 256] & [256, 256, 256] & [256, 256, 256] & [256, 256, 256] & [256, 256, 256] & [256, 256, 256] & [256, 256, 256] & [256, 256, 256] & [64] \\
        (Value/Cost) Critic hidden size & [256, 256, 256] & [256, 256, 256] & [256, 256, 256] & [256, 256, 256] & [256, 256, 256] & [256, 256, 256] & [256, 256, 256] & [256, 256, 256] & [256, 256, 256] & [64] \\
        Classifier Network & [64,64] & [64,64] & [32] & [32] & [32] & [32] & [32] & [64, 64] & [64, 64] & [64] \\
        Gamma & 0.99 & 0.99 & 0.99 & 0.99 & 0.99 & 0.99 & 0.99 & 0.99 & 0.99 & 0.99 \\
        lr (Actor/Critics/Classifier) & 0.0001 & 0.0001 & 0.0001 & 0.0001 & 0.0001 & 0.0001 & 0.0001 & 0.0001 & 0.0001 & 0.0001 \\
        lr Lagrangian & 0.01 & 0.01 & 0.01 & 0.01 & 0.01 & 0.01 & 0.01 & 0.01 & 0.01 & 0.01 \\
        n\_epochs (PPO/Critics) & 160 & 160 & 160 & 160 & 160 & 160 & 160 & 160 & 160 & 160 \\
        n\_epochs Classifier & 5000 & 5000 & 5000 & 5000 & 5000 & 5000 & 5000 & 5000 & 5000 & 5000 \\
        Classifier batch size & 4096 & 4096 & 4096 & 4096 & 4096 & 4096 & 4096 & 4096 & 4096 & 4096 \\
        SimHash Embedding size (k) & 11 & 13 & 64 & 16 & 15 & 15 & 15 & 16 & 16 & 24 \\
        \bottomrule
    \end{tabular}
    }
\end{table}

\subsection{Main Experiments}
The training curves for the benchmark experiments and the \textit{Driver} environment are presented in Figure \ref{fig:main_results} and Figure \ref{fig:driver_both}. Additionally information on the number of queries presented for feedback is presented in Table \ref{tab:trajectories}.

\begin{table}[htbp]
    \centering
    \caption{Number of trajectories presented to the evaluator for feedback}
    \label{tab:trajectories}
    \resizebox{\textwidth}{!}{%
    \begin{tabular}{ccccc}
        \toprule
        Nature of Costs & Environment & Number of Trajectories shown & Episode Length & Segment Length ($k$) \\
        \midrule
        \multirow{4}{*}{Position} &
        Point Circle & $889.66 \pm 185.27$ & 500 & 500\\
        & Car Circle & $1063 \pm 73.48$ & 500 & 500\\
        & Biased Pendulum & $3939.33 \pm 231.89$ & 1000 & 1000\\
        & Blocked Swimmer & $646.83 \pm 127.56$ & 1000 & 1000\\
        \midrule
        \multirow{3}{*}{Velocity }  
        & HalfCheetah & $4112.83 \pm 134.26$ & 1000 & 1000\\
        & Hopper & $1924.99 \pm 92.59$ & 1000 & 1000\\
        & Walker2d & $4170.33 \pm 174.83$ & 1000 & 1000\\
        \midrule
        \multirow{4}{*}{Obstacles}
        &Point Goal & $7892 \pm 178.15$ & 1000 & 1\\
        &Car Goal & $3866.5 \pm 80.83$ & 1000 & 1\\
        &Point Push & $6409.33 \pm 77.46$ & 1000 & 1\\
        &Car Push & $3368.33 \pm 176.36$ & 1000 & 1\\
        \midrule
        \multirow{3}{*}{Driving}
        &Blocking & $6242.3 \pm 778.13$ & 100 & 1\\
        &Two Lanes & $2912.37 \pm 443.31$ & 100 & 1\\
        &Lane Change & $3510.82 \pm 644.28$ & 100 & 1\\
        \bottomrule
    \end{tabular}
    }
\end{table}

\begin{wrapfigure}{R}{0.4\textwidth}
    \vspace{-20pt}
    \centering
    \includegraphics[width=0.4\textwidth]{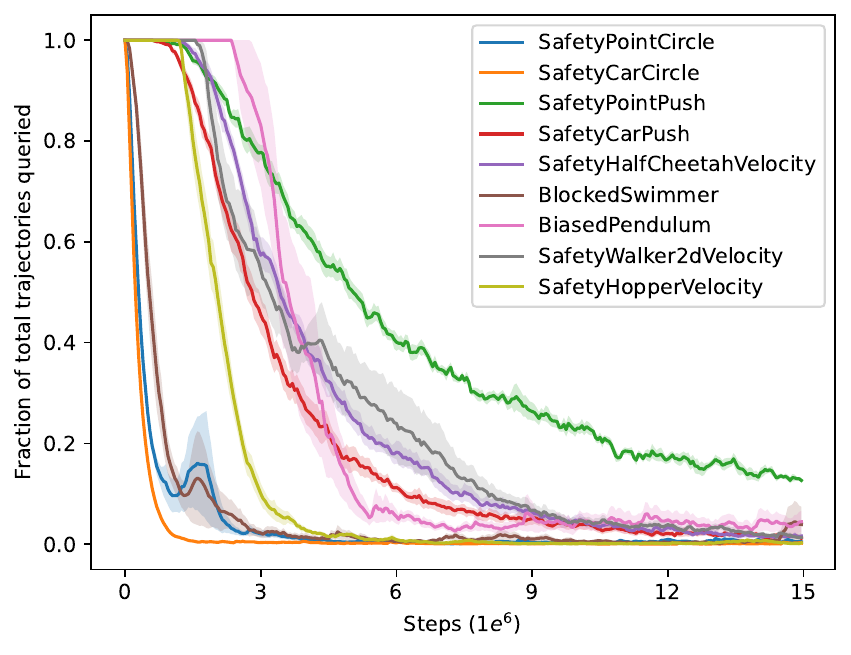} 
    \caption{Implicit decreasing schedule observed when following novelty based sampling across different environments. The results are averaged over $6$ independent seeds.}
    \label{fig:novelty_ratio}
    \vspace{-30pt}
\end{wrapfigure}

\begin{figure}
\centering
\includegraphics[height=0.9\textheight]{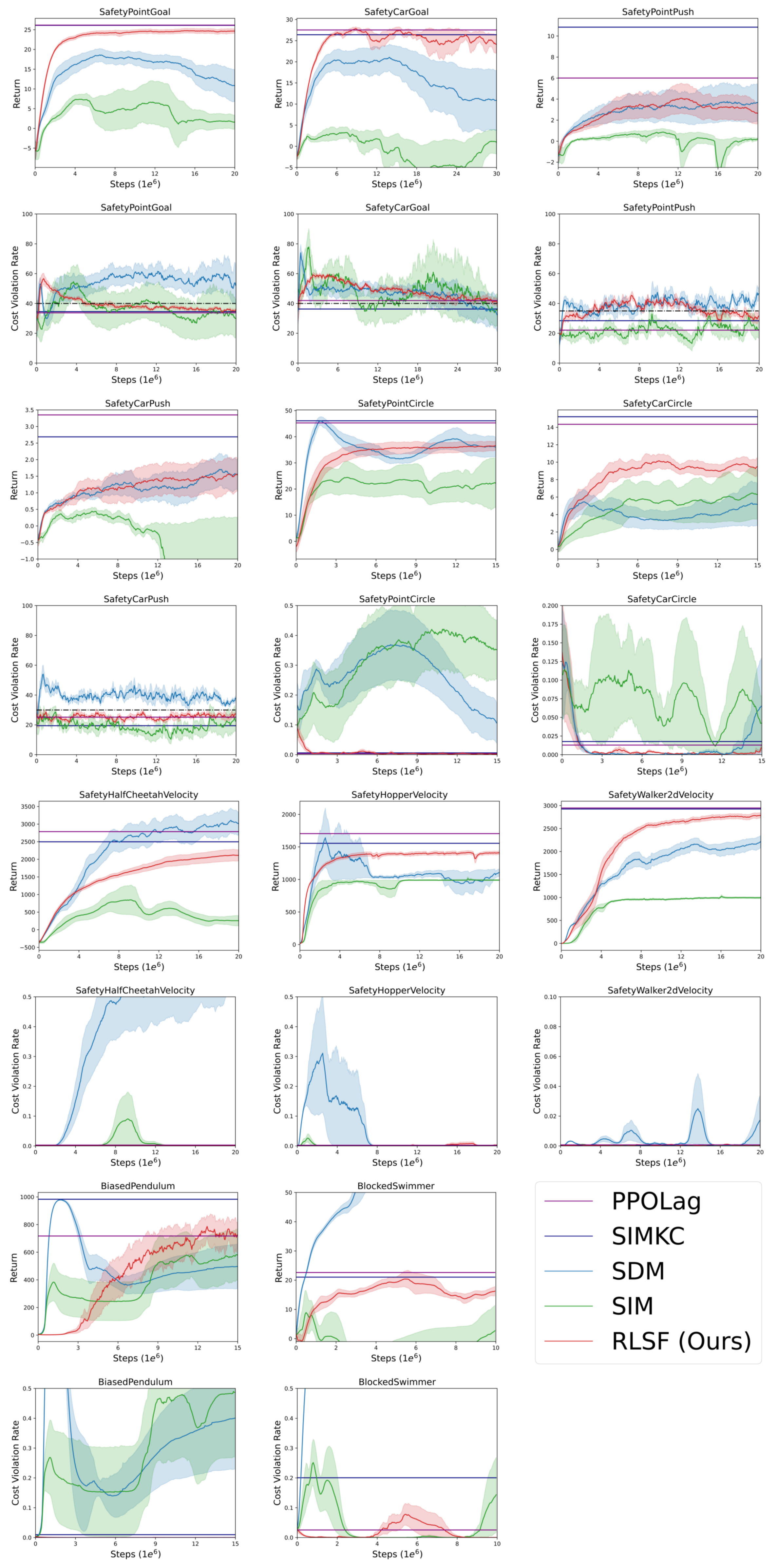}
    \caption{Training curves depicting the performance of different algorithms, based on six independent seeds. The curves show the mean returns, with shaded regions representing the standard error. Parallel lines indicate the best-performing run.}
    \label{fig:main_results}
\end{figure}

\begin{figure}
    \centering
    \includegraphics[width=\textwidth]{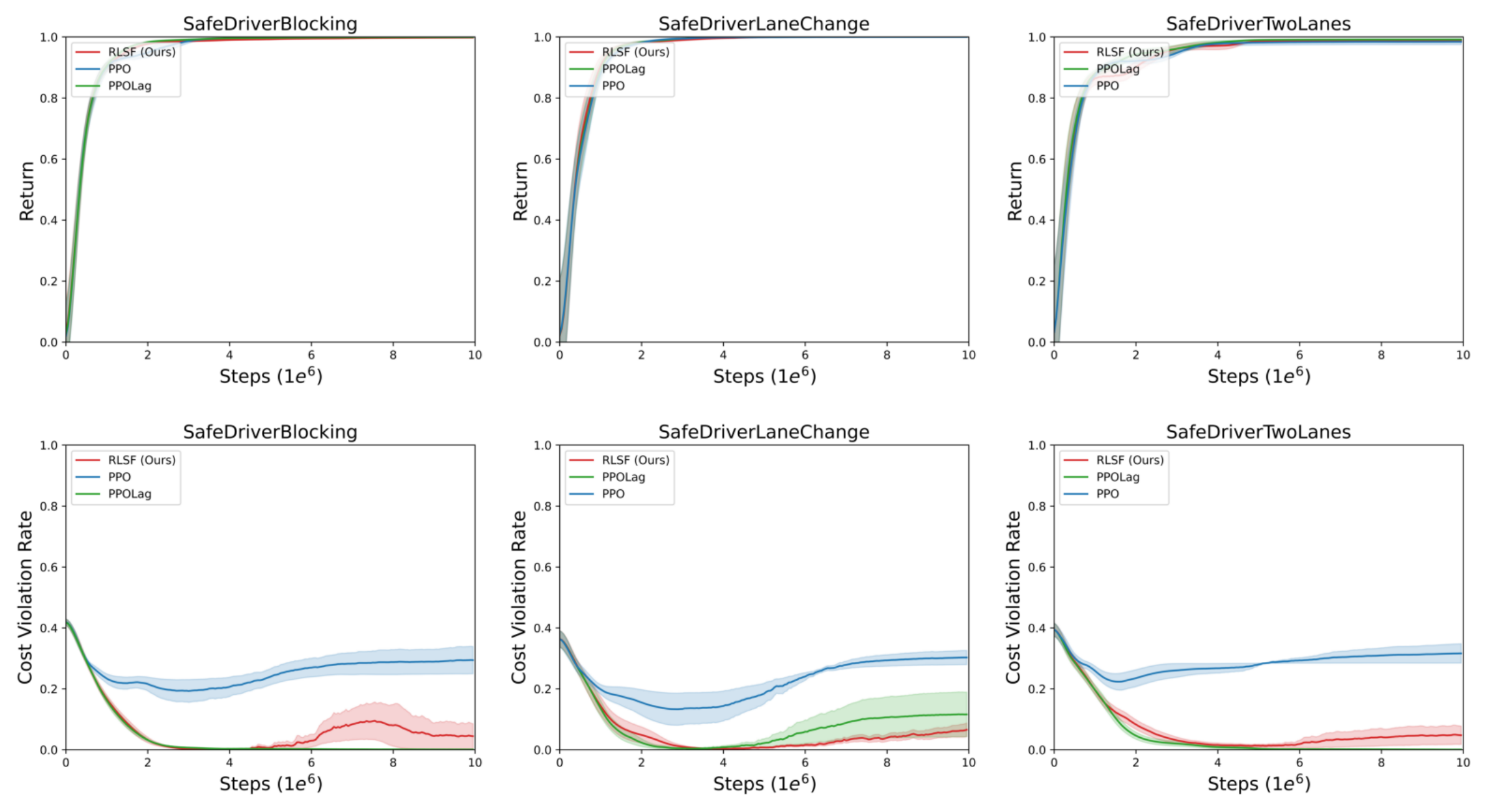}
    \caption{Performance of algorithms in different driving scenarios. Each algorithm is run for $6$ independent seeds, curves represent the mean and shaded regions represent the standard error. Normalized return is given by the formula $r = \frac{r - r_{\text{PPO}}}{r - r_{\text{random}}}$ where $r_{\text{PPO}}$ is the return of PPO and $r_{\text{random}}$ is the return of a random policy.} 
    \label{fig:driver_both}
\end{figure}

\subsection{Novelty based Sampling}

Figure \ref{fig:novelty_ratio} shows the implicit decreasing schedule encountered when using \textit{novelty based sampling} in various environments. Figure \ref{fig:novelty_abl2} shows additional comparison with other querying mechanisms in the Biased Pendulum environment. To further assess the effectiveness of \textit{novelty based sampling}, we categorized the trajectories collected during a data collection round based on novelty. We then evaluated the model's accuracy on the pseudo-prediction task for these two trajectory types, as shown in Figure \ref{fig:novel_vs_not_novel}. This analysis indicates that the model's accuracy is lower for \textit{novel} trajectories, corroborating the claims made in Section \ref{sec:sampling}.

\begin{figure}
    \centering
    \includegraphics[width=\textwidth]{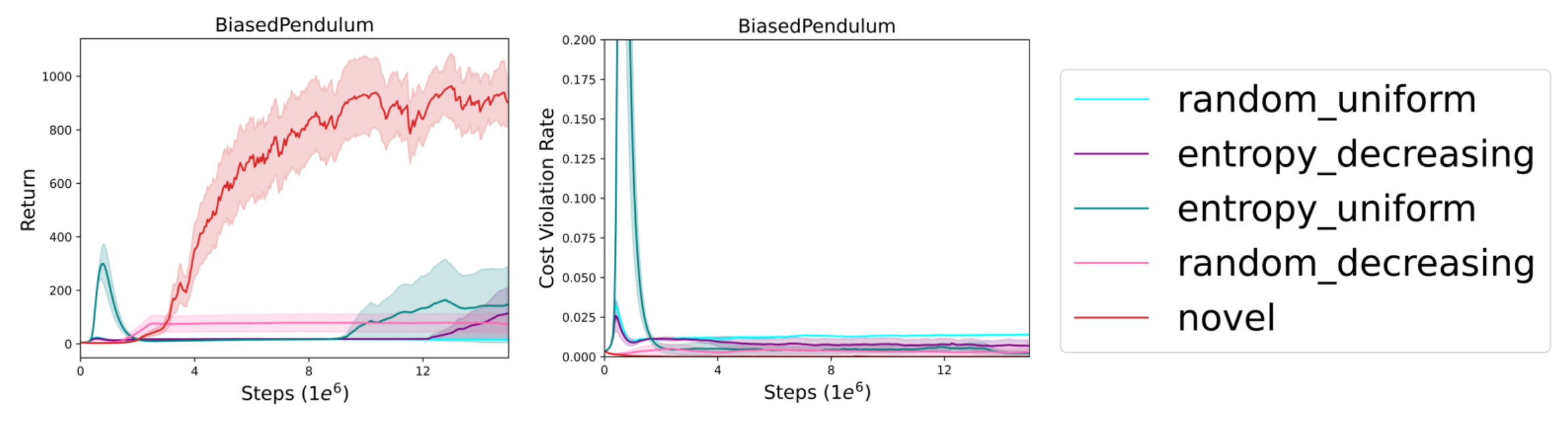}
    \caption{Comparison of different sampling and scheduling schemes. The results are averaged over $3$ independent seeds. The proposed sampling method generates on average $3600$ queries, hence for fair comparison the other methods were given a budget of $4000$ queries.}
    \label{fig:novelty_abl2}
\end{figure}

\begin{figure}
    \centering
    \includegraphics[width=0.8\textwidth]{Images_Rebuttal/novel_vs_not_novel.png}
    \caption{Model prediction accuracy (mean $\pm$ standard error) in the surrogate task (averaged over 3 seeds) with trajectory level feedback. Predictions are made on the next $50$k steps following $250$k training steps.
}
    \label{fig:novel_vs_not_novel}
\end{figure}

\subsection{Number of Queries generated}
The number of trajectories queried for feedback is of the order of $\mathcal{O}(1e^3)$, the exact number differs across environments and individual runs due to novelty based sampling. Detailed information on the number of queries can be found in Table \ref{tab:trajectories}. We provide the two baseline methods with an advantage, whereby feedback is obtained for every trajectory generated by the policy, which is of the order $\mathcal{O}(1e^4)$.

\subsection{Complexity of Constraint Inference}
\begin{figure}
    
    \centering
    \includegraphics[width=\textwidth]{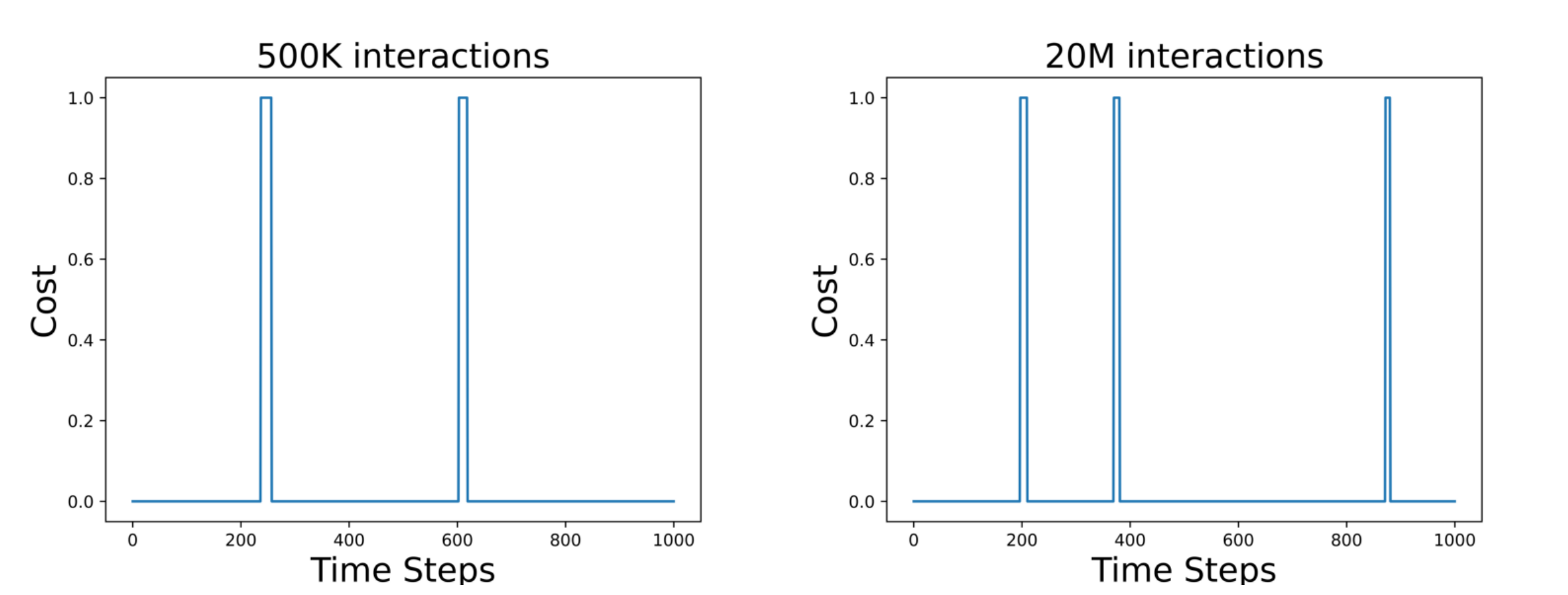}
    \caption{Costs incurred by a policy after $500$K and $20$M training interactions over a randomly sampled trajectory in the Point Goal environment.}
    \label{fig:complexity}
    
\end{figure}

Figure \ref{fig:complexity} highlights the difficulty of inferring constraints when cost violations are sparse. The brief interaction periods with obstacles in the environment make it challenging to assign credit to unsafe states when feedback is collected over long horizons. Consequently, feedback was collected at the state level in such settings. This is illustrated in Figure \ref{fig:seg_size}, where the overestimation bias is considerably higher in the \textit{Point Goal} environment compared to \textit{Point Circle}, resulting in a significant drop in performance.

\subsection{Ablation on the size of the feedback buffer}
We conducted an ablation study on the size of the feedback buffer, the results of which are presented in Figure \ref{fig:buffer_size}. We can observe that the policy becomes overly conservative when the size of the feedback buffer is small (100k, which corresponds to approximately two data collection rounds). We hypothesize that this is because of two interacting factors: (a) discarding past feedback can result in forgetting, which in turn leads to inaccuracies in estimating unsafe regions, and (b) there exists a feedback loop in which the inferred cost function influences on-policy data collection. If the inferred cost function overestimates costs in certain regions, the policy will avoid them and fail to gather the necessary feedback to correct this error. Conversely, if the costs are underestimated, the policy will visit those states and receive feedback, allowing for corrections. Therefore, smaller feedback buffer sizes can result in the learning of overly conservative policies.

\begin{figure}
    \centering
    \includegraphics[width=\textwidth]{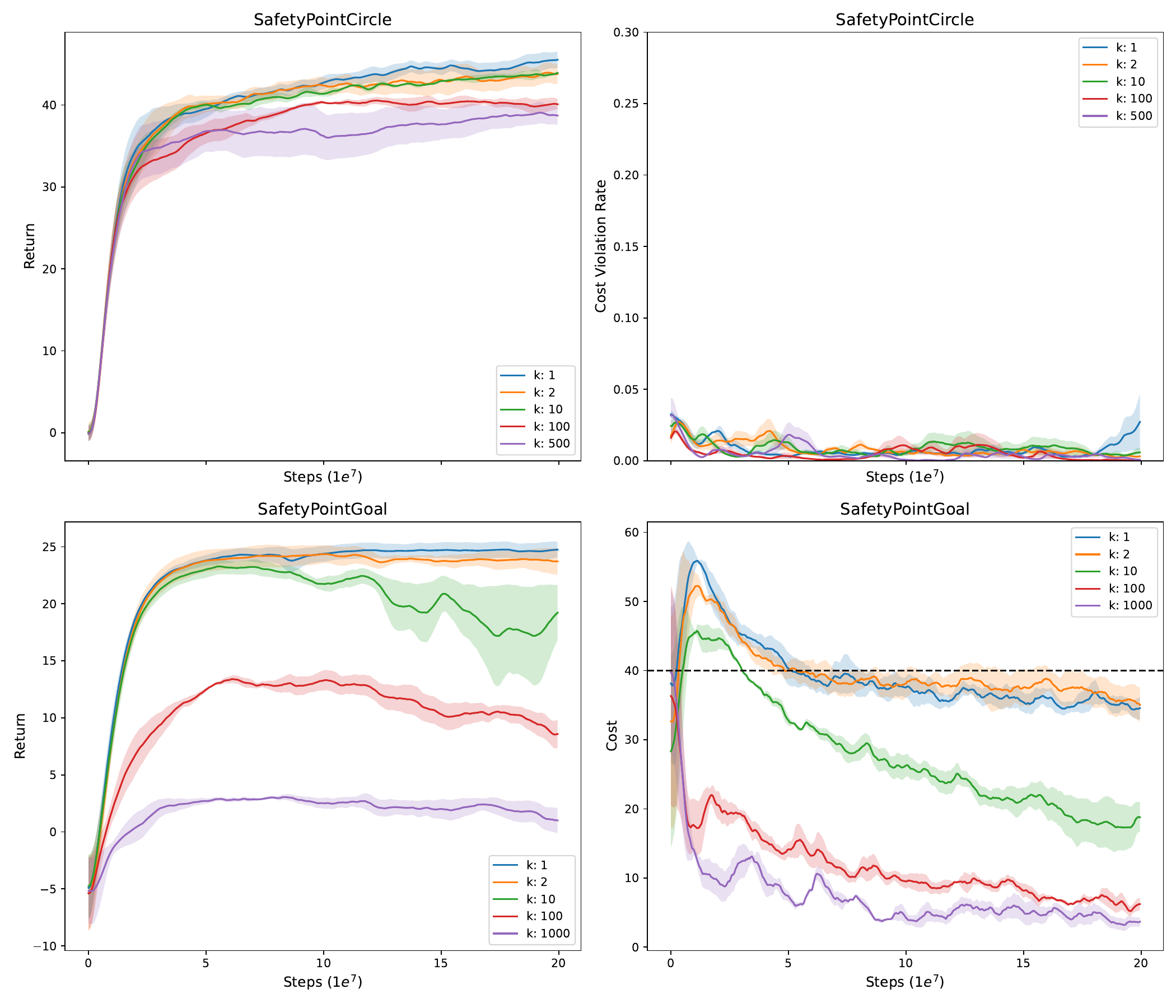}
    \caption{Ablation on the segment length for feedback elicitation.} 
    \label{fig:seg_size}
\end{figure}

\begin{figure}
    \centering
\includegraphics[width=\textwidth]{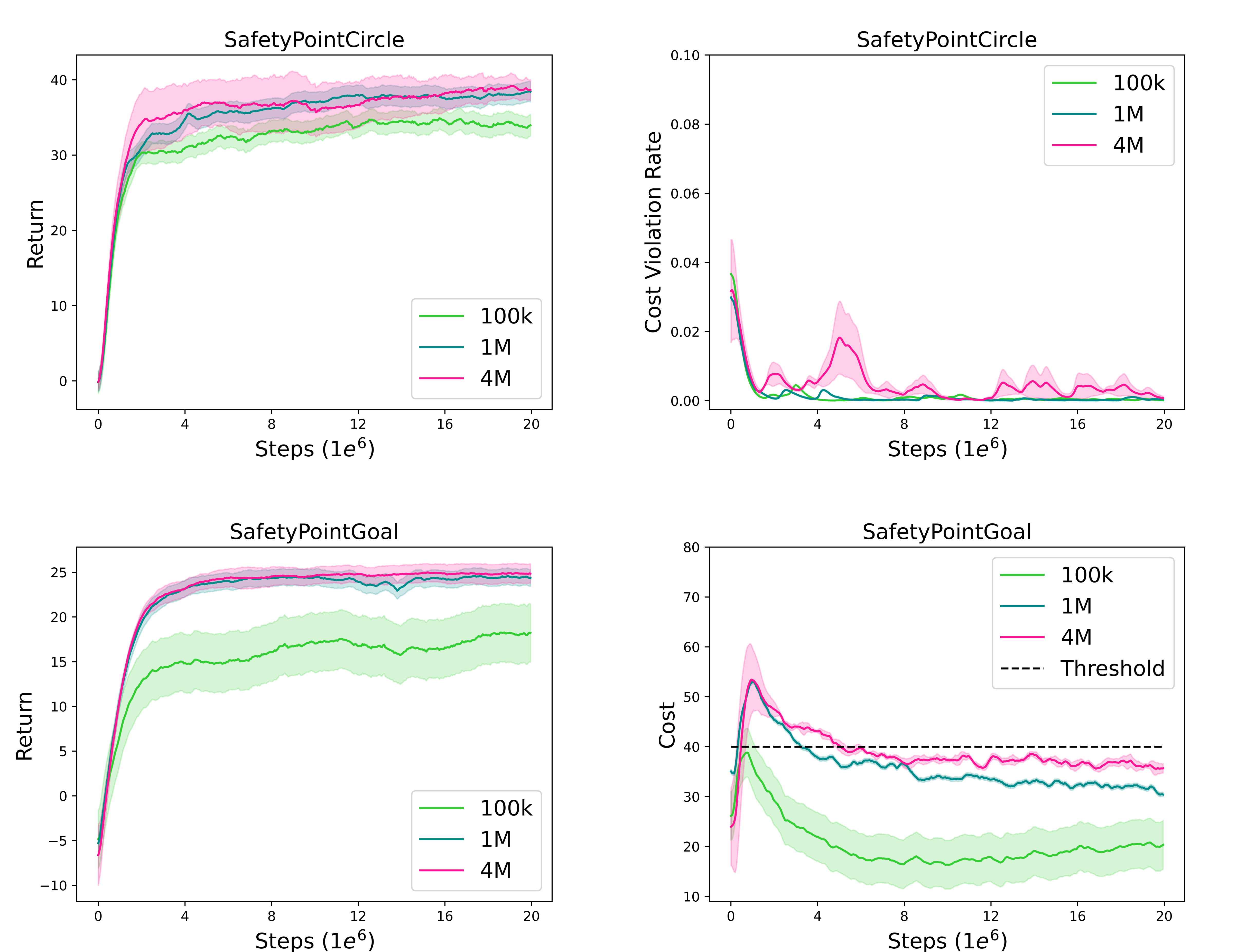}
    \caption{Impact of the size of the feedback buffer on performance.} 
    \label{fig:buffer_size}
\end{figure}

\subsection{Need for the surrogate loss}
\begin{figure}
\centering
\includegraphics[width=0.8\textwidth]{Images_Rebuttal/gradient_explosion.png}
    \caption{Norm of the gradient when optimising the MLE loss v/s the proposed surrogate loss with trajectory level feedback.}
    \label{fig:grad_norm}
\end{figure}

In Section \ref{sec:theory}, we highlighted the challenges associated with minimizing Eq \ref{eq:mle} particularly due to the term $\prod_{t=i}^{j} p^{safe}(s_t,a_t)$ which tends to collapse to $0$ with longer segment lengths, resulting in unstable gradients. To investigate this further, we conducted a simple experiment in which we performed a few gradient steps to optimize Eq \ref{eq:mle} and plotted the gradient norm. The results, shown in Figure \ref{fig:grad_norm} reveal that (a) directly optimizing Eq \ref{eq:mle} produces gradients with a large norm that either quickly vanish or explode, rendering the learning process infeasible. In contrast, the surrogate objective in Eq \ref{eq:cross_entropy} yields stable gradients allowing for convergence.

\clearpage
\section*{NeurIPS Paper Checklist}

\begin{enumerate}

\item {\bf Claims}
    \item[] Question: Do the main claims made in the abstract and introduction accurately reflect the paper's contributions and scope?
    \item[] Answer: \answerYes{}
    \item[] Justification: The main claims made in the abstract and introduction are backed by theoretical and empirical results shown in Section \ref{sec:theory} and \ref{sec:experiments} respectively.
    \item[] Guidelines:
    \begin{itemize}
        \item The answer NA means that the abstract and introduction do not include the claims made in the paper.
        \item The abstract and/or introduction should clearly state the claims made, including the contributions made in the paper and important assumptions and limitations. A No or NA answer to this question will not be perceived well by the reviewers. 
        \item The claims made should match theoretical and experimental results, and reflect how much the results can be expected to generalize to other settings. 
        \item It is fine to include aspirational goals as motivation as long as it is clear that these goals are not attained by the paper. 
    \end{itemize}

\item {\bf Limitations}
    \item[] Question: Does the paper discuss the limitations of the work performed by the authors?
    \item[] Answer: \answerYes{}
    \item[] Justification: The limitations of the proposed approach are discussed in Section \ref{sec:limited_work}.
    \item[] Guidelines:
    \begin{itemize}
        \item The answer NA means that the paper has no limitation while the answer No means that the paper has limitations, but those are not discussed in the paper. 
        \item The authors are encouraged to create a separate "Limitations" section in their paper.
        \item The paper should point out any strong assumptions and how robust the results are to violations of these assumptions (e.g., independence assumptions, noiseless settings, model well-specification, asymptotic approximations only holding locally). The authors should reflect on how these assumptions might be violated in practice and what the implications would be.
        \item The authors should reflect on the scope of the claims made, e.g., if the approach was only tested on a few datasets or with a few runs. In general, empirical results often depend on implicit assumptions, which should be articulated.
        \item The authors should reflect on the factors that influence the performance of the approach. For example, a facial recognition algorithm may perform poorly when image resolution is low or images are taken in low lighting. Or a speech-to-text system might not be used reliably to provide closed captions for online lectures because it fails to handle technical jargon.
        \item The authors should discuss the computational efficiency of the proposed algorithms and how they scale with dataset size.
        \item If applicable, the authors should discuss possible limitations of their approach to address problems of privacy and fairness.
        \item While the authors might fear that complete honesty about limitations might be used by reviewers as grounds for rejection, a worse outcome might be that reviewers discover limitations that aren't acknowledged in the paper. The authors should use their best judgment and recognize that individual actions in favor of transparency play an important role in developing norms that preserve the integrity of the community. Reviewers will be specifically instructed to not penalize honesty concerning limitations.
    \end{itemize}

\item {\bf Theory Assumptions and Proofs}
    \item[] Question: For each theoretical result, does the paper provide the full set of assumptions and a complete (and correct) proof?
    \item[] Answer: \answerYes{} 
    \item[] Justification: All the assumptions are discussed in Sections \ref{sec:prob_defn} and \ref{sec:method} and the complete proof can be found in the Appendix \ref{sec:proofs}.
    \item[] Guidelines:
    \begin{itemize}
        \item The answer NA means that the paper does not include theoretical results. 
        \item All the theorems, formulas, and proofs in the paper should be numbered and cross-referenced.
        \item All assumptions should be clearly stated or referenced in the statement of any theorems.
        \item The proofs can either appear in the main paper or the supplemental material, but if they appear in the supplemental material, the authors are encouraged to provide a short proof sketch to provide intuition. 
        \item Inversely, any informal proof provided in the core of the paper should be complemented by formal proofs provided in appendix or supplemental material.
        \item Theorems and Lemmas that the proof relies upon should be properly referenced. 
    \end{itemize}

    \item {\bf Experimental Result Reproducibility}
    \item[] Question: Does the paper fully disclose all the information needed to reproduce the main experimental results of the paper to the extent that it affects the main claims and/or conclusions of the paper (regardless of whether the code and data are provided or not)?
    \item[] Answer: \answerYes{}
    \item[] Justification: Apart from the Safety Gymansium environments that are publicly available, all other environments are described in detail so that they can be reproduced from scratch. The hyper parameters used to train the proposed approach are given in Appendix \ref{sec:hyperparams} to ensure reproducibility.
    \item[] Guidelines:
    \begin{itemize}
        \item The answer NA means that the paper does not include experiments.
        \item If the paper includes experiments, a No answer to this question will not be perceived well by the reviewers: Making the paper reproducible is important, regardless of whether the code and data are provided or not.
        \item If the contribution is a dataset and/or model, the authors should describe the steps taken to make their results reproducible or verifiable. 
        \item Depending on the contribution, reproducibility can be accomplished in various ways. For example, if the contribution is a novel architecture, describing the architecture fully might suffice, or if the contribution is a specific model and empirical evaluation, it may be necessary to either make it possible for others to replicate the model with the same dataset, or provide access to the model. In general. releasing code and data is often one good way to accomplish this, but reproducibility can also be provided via detailed instructions for how to replicate the results, access to a hosted model (e.g., in the case of a large language model), releasing of a model checkpoint, or other means that are appropriate to the research performed.
        \item While NeurIPS does not require releasing code, the conference does require all submissions to provide some reasonable avenue for reproducibility, which may depend on the nature of the contribution. For example
        \begin{enumerate}
            \item If the contribution is primarily a new algorithm, the paper should make it clear how to reproduce that algorithm.
            \item If the contribution is primarily a new model architecture, the paper should describe the architecture clearly and fully.
            \item If the contribution is a new model (e.g., a large language model), then there should either be a way to access this model for reproducing the results or a way to reproduce the model (e.g., with an open-source dataset or instructions for how to construct the dataset).
            \item We recognize that reproducibility may be tricky in some cases, in which case authors are welcome to describe the particular way they provide for reproducibility. In the case of closed-source models, it may be that access to the model is limited in some way (e.g., to registered users), but it should be possible for other researchers to have some path to reproducing or verifying the results.
        \end{enumerate}
    \end{itemize}

\item {\bf Open access to data and code}
    \item[] Question: Does the paper provide open access to the data and code, with sufficient instructions to faithfully reproduce the main experimental results, as described in supplemental material?
    \item[] Answer: \answerYes{} % Replace by \answerYes{}, \answerNo{}, or \answerNA{}.
    \item[] Justification: All the environments used are described in Section \ref{sec:envs}. The code (with instructions on how to run) is publicly available: \href{https://github.com/shshnkreddy/RLSF}{https://github.com/shshnkreddy/RLSF} 
    \item[] Guidelines:
    \begin{itemize}
        \item The answer NA means that paper does not include experiments requiring code.
        \item Please see the NeurIPS code and data submission guidelines (\url{https://nips.cc/public/guides/CodeSubmissionPolicy}) for more details.
        \item While we encourage the release of code and data, we understand that this might not be possible, so “No” is an acceptable answer. Papers cannot be rejected simply for not including code, unless this is central to the contribution (e.g., for a new open-source benchmark).
        \item The instructions should contain the exact command and environment needed to run to reproduce the results. See the NeurIPS code and data submission guidelines (\url{https://nips.cc/public/guides/CodeSubmissionPolicy}) for more details.
        \item The authors should provide instructions on data access and preparation, including how to access the raw data, preprocessed data, intermediate data, and generated data, etc.
        \item The authors should provide scripts to reproduce all experimental results for the new proposed method and baselines. If only a subset of experiments are reproducible, they should state which ones are omitted from the script and why.
        \item At submission time, to preserve anonymity, the authors should release anonymized versions (if applicable).
        \item Providing as much information as possible in supplemental material (appended to the paper) is recommended, but including URLs to data and code is permitted.
    \end{itemize}

\item {\bf Experimental Setting/Details}
    \item[] Question: Does the paper specify all the training and test details (e.g., data splits, hyperparameters, how they were chosen, type of optimizer, etc.) necessary to understand the results?
    \item[] Answer: \answerYes{} % Replace by \answerYes{}, \answerNo{}, or \answerNA{}.
    \item[] Justification: The hyperparameters used to run our propsed approach is present in Section \ref{sec:hyperparams} in the Appendix.
    \item[] Guidelines:
    \begin{itemize}
        \item The answer NA means that the paper does not include experiments.
        \item The experimental setting should be presented in the core of the paper to a level of detail that is necessary to appreciate the results and make sense of them.
        \item The full details can be provided either with the code, in appendix, or as supplemental material.
    \end{itemize}

\item {\bf Experiment Statistical Significance}
    \item[] Question: Does the paper report error bars suitably and correctly defined or other appropriate information about the statistical significance of the experiments?
    \item[] Answer: \answerYes{}{} % Replace by \answerYes{}, \answerNo{}, or \answerNA{}.
    \item[] Justification: All results presented in the paper are averaged over multiple seeds and are presented with the standard error.
    \item[] Guidelines:
    \begin{itemize}
        \item The answer NA means that the paper does not include experiments.
        \item The authors should answer "Yes" if the results are accompanied by error bars, confidence intervals, or statistical significance tests, at least for the experiments that support the main claims of the paper.
        \item The factors of variability that the error bars are capturing should be clearly stated (for example, train/test split, initialization, random drawing of some parameter, or overall run with given experimental conditions).
        \item The method for calculating the error bars should be explained (closed form formula, call to a library function, bootstrap, etc.)
        \item The assumptions made should be given (e.g., Normally distributed errors).
        \item It should be clear whether the error bar is the standard deviation or the standard error of the mean.
        \item It is OK to report 1-sigma error bars, but one should state it. The authors should preferably report a 2-sigma error bar than state that they have a 96\% CI, if the hypothesis of Normality of errors is not verified.
        \item For asymmetric distributions, the authors should be careful not to show in tables or figures symmetric error bars that would yield results that are out of range (e.g. negative error rates).
        \item If error bars are reported in tables or plots, The authors should explain in the text how they were calculated and reference the corresponding figures or tables in the text.
    \end{itemize}

\item {\bf Experiments Compute Resources}
    \item[] Question: For each experiment, does the paper provide sufficient information on the computer resources (type of compute workers, memory, time of execution) needed to reproduce the experiments?
    \item[] Answer: \answerYes{} % Replace by \answerYes{}, \answerNo{}, or \answerNA{}.
    \item[] Justification: Details on the compute and time taken to produce the results is available in Section \ref{sec:hyperparams} in the Appendix.
    \item[] Guidelines:
    \begin{itemize}
        \item The answer NA means that the paper does not include experiments.
        \item The paper should indicate the type of compute workers CPU or GPU, internal cluster, or cloud provider, including relevant memory and storage.
        \item The paper should provide the amount of compute required for each of the individual experimental runs as well as estimate the total compute. 
        \item The paper should disclose whether the full research project required more compute than the experiments reported in the paper (e.g., preliminary or failed experiments that didn't make it into the paper). 
    \end{itemize}
    
\item {\bf Code Of Ethics}
    \item[] Question: Does the research conducted in the paper conform, in every respect, with the NeurIPS Code of Ethics \url{https://neurips.cc/public/EthicsGuidelines}?
    \item[] Answer: \answerYes{} % Replace by \answerYes{}, \answerNo{}, or \answerNA{}.
    \item[] Justification: The presented research is compliant with the NeurIPS Code of Ethics.
    \item[] Guidelines:
    \begin{itemize}
        \item The answer NA means that the authors have not reviewed the NeurIPS Code of Ethics.
        \item If the authors answer No, they should explain the special circumstances that require a deviation from the Code of Ethics.
        \item The authors should make sure to preserve anonymity (e.g., if there is a special consideration due to laws or regulations in their jurisdiction).
    \end{itemize}

\item {\bf Broader Impacts}
    \item[] Question: Does the paper discuss both potential positive societal impacts and negative societal impacts of the work performed?
    \item[] Answer: \answerYes{} % Replace by \answerYes{}, \answerNo{}, or \answerNA{}.
    \item[] Justification: The social impacts of our work is discussed in Section \ref{sec:limited_work}.
    \item[] Guidelines:
    \begin{itemize}
        \item The answer NA means that there is no societal impact of the work performed.
        \item If the authors answer NA or No, they should explain why their work has no societal impact or why the paper does not address societal impact.
        \item Examples of negative societal impacts include potential malicious or unintended uses (e.g., disinformation, generating fake profiles, surveillance), fairness considerations (e.g., deployment of technologies that could make decisions that unfairly impact specific groups), privacy considerations, and security considerations.
        \item The conference expects that many papers will be foundational research and not tied to particular applications, let alone deployments. However, if there is a direct path to any negative applications, the authors should point it out. For example, it is legitimate to point out that an improvement in the quality of generative models could be used to generate deepfakes for disinformation. On the other hand, it is not needed to point out that a generic algorithm for optimizing neural networks could enable people to train models that generate Deepfakes faster.
        \item The authors should consider possible harms that could arise when the technology is being used as intended and functioning correctly, harms that could arise when the technology is being used as intended but gives incorrect results, and harms following from (intentional or unintentional) misuse of the technology.
        \item If there are negative societal impacts, the authors could also discuss possible mitigation strategies (e.g., gated release of models, providing defenses in addition to attacks, mechanisms for monitoring misuse, mechanisms to monitor how a system learns from feedback over time, improving the efficiency and accessibility of ML).
    \end{itemize}
    
\item {\bf Safeguards}
    \item[] Question: Does the paper describe safeguards that have been put in place for responsible release of data or models that have a high risk for misuse (e.g., pretrained language models, image generators, or scraped datasets)?
    \item[] Answer: \answerNA{}{} % Replace by \answerYes{}, \answerNo{}, or \answerNA{}.
    \item[] Justification: The work presented in this paper has no risk of misuse.
    \item[] Guidelines:
    \begin{itemize}
        \item The answer NA means that the paper poses no such risks.
        \item Released models that have a high risk for misuse or dual-use should be released with necessary safeguards to allow for controlled use of the model, for example by requiring that users adhere to usage guidelines or restrictions to access the model or implementing safety filters. 
        \item Datasets that have been scraped from the Internet could pose safety risks. The authors should describe how they avoided releasing unsafe images.
        \item We recognize that providing effective safeguards is challenging, and many papers do not require this, but we encourage authors to take this into account and make a best faith effort.
    \end{itemize}

\item {\bf Licenses for existing assets}
    \item[] Question: Are the creators or original owners of assets (e.g., code, data, models), used in the paper, properly credited and are the license and terms of use explicitly mentioned and properly respected?
    \item[] Answer: \answerYes{}{} % Replace by \answerYes{}, \answerNo{}, or \answerNA{}.
    \item[] Justification: The README file present in the code credits the authors, and the license and terms of use of has been properly respected.
    \item[] Guidelines:
    \begin{itemize}
        \item The answer NA means that the paper does not use existing assets.
        \item The authors should cite the original paper that produced the code package or dataset.
        \item The authors should state which version of the asset is used and, if possible, include a URL.
        \item The name of the license (e.g., CC-BY 4.0) should be included for each asset.
        \item For scraped data from a particular source (e.g., website), the copyright and terms of service of that source should be provided.
        \item If assets are released, the license, copyright information, and terms of use in the package should be provided. For popular datasets, \url{paperswithcode.com/datasets} has curated licenses for some datasets. Their licensing guide can help determine the license of a dataset.
        \item For existing datasets that are re-packaged, both the original license and the license of the derived asset (if it has changed) should be provided.
        \item If this information is not available online, the authors are encouraged to reach out to the asset's creators.
    \end{itemize}

\item {\bf New Assets}
    \item[] Question: Are new assets introduced in the paper well documented and is the documentation provided alongside the assets?
    \item[] Answer: \answerYes{}{} % Replace by \answerYes{}, \answerNo{}, or \answerNA{}.
    \item[] Justification: The Safety Driver environments are described in Section \ref{sec:envs} in the Appendix and instructions to run them are present in the README file included with the code.
    \item[] Guidelines:
    \begin{itemize}
        \item The answer NA means that the paper does not release new assets.
        \item Researchers should communicate the details of the dataset/code/model as part of their submissions via structured templates. This includes details about training, license, limitations, etc. 
        \item The paper should discuss whether and how consent was obtained from people whose asset is used.
        \item At submission time, remember to anonymize your assets (if applicable). You can either create an anonymized URL or include an anonymized zip file.
    \end{itemize}

\item {\bf Crowdsourcing and Research with Human Subjects}
    \item[] Question: For crowdsourcing experiments and research with human subjects, does the paper include the full text of instructions given to participants and screenshots, if applicable, as well as details about compensation (if any)? 
    \item[] Answer: \answerNA{} % Replace by \answerYes{}, \answerNo{}, or \answerNA{}.
    \item[] Justification: This paper does not conduct research with human subjects.
    \item[] Guidelines:
    \begin{itemize}
        \item The answer NA means that the paper does not involve crowdsourcing nor research with human subjects.
        \item Including this information in the supplemental material is fine, but if the main contribution of the paper involves human subjects, then as much detail as possible should be included in the main paper. 
        \item According to the NeurIPS Code of Ethics, workers involved in data collection, curation, or other labor should be paid at least the minimum wage in the country of the data collector. 
    \end{itemize}

\item {\bf Institutional Review Board (IRB) Approvals or Equivalent for Research with Human Subjects}
    \item[] Question: Does the paper describe potential risks incurred by study participants, whether such risks were disclosed to the subjects, and whether Institutional Review Board (IRB) approvals (or an equivalent approval/review based on the requirements of your country or institution) were obtained?
    \item[] Answer: \answerNA{} % Replace by \answerYes{}, \answerNo{}, or \answerNA{}.
    \item[] Justification: This paper does not conduct research with human subjects.
    \item[] Guidelines:
    \begin{itemize}
        \item The answer NA means that the paper does not involve crowdsourcing nor research with human subjects.
        \item Depending on the country in which research is conducted, IRB approval (or equivalent) may be required for any human subjects research. If you obtained IRB approval, you should clearly state this in the paper. 
        \item We recognize that the procedures for this may vary significantly between institutions and locations, and we expect authors to adhere to the NeurIPS Code of Ethics and the guidelines for their institution. 
        \item For initial submissions, do not include any information that would break anonymity (if applicable), such as the institution conducting the review.
    \end{itemize}

\end{enumerate}
\end{document}